\definecolor{Ocean}{RGB}{129,194,234}
\newtheorem{theorem}{Theorem}
\begin{document}

\title{A Fresh Look at Generalized Category Discovery through Non-negative Matrix Factorization}

\author{Zhong~Ji, \textit{Senior Member, IEEE}, Shuo Yang, Jingren~Liu, Yanwei~Pang, \textit{Senior Member, IEEE}, Jungong Han, \textit{Senior Member, IEEE} 
\thanks{This work was supported by the National Key Research and Development Program of China (Grant No. 2022ZD0160403), and the National Natural Science Foundation of China (NSFC) under Grant 62176178 (Corresponding author: Jingren~Liu).}
\thanks{Zhong~Ji and Yanwei~Pang are with the School of Electrical and Information Engineering, Tianjin Key Laboratory of Brain-Inspired Intelligence Technology, Tianjin University, Tianjin 300072, China, and also with the Shanghai Artificial Intelligence Laboratory, Shanghai 200232, China (e-mail: \{jizhong, pyw\}@tju.edu.cn).}
\thanks{Shuo Yang and Jingren Liu are with the School of Electrical and Information Engineering, Tianjin Key Laboratory of Brain-Inspired Intelligence Technology, Tianjin University, Tianjin 300072, China (e-mail: \{yang\_shuo, jrl0219\}@tju.edu.cn).}
\thanks{Jungong Han is with the Department of Computer Science, the University of Sheffield, UK (e-mail: jungonghan77@gmail.com).}
}

\markboth{Journal of \LaTeX\ Class Files,~Vol.~14, No.~8, August~2021}%
{Shell \MakeLowercase{\textit{et al.}}: A Sample Article Using IEEEtran.cls for IEEE Journals}


\maketitle

\begin{abstract}
Generalized Category Discovery (GCD) aims to classify both base and novel images using labeled base data. However, current approaches inadequately address the intrinsic optimization of the co-occurrence matrix $\bar{A}$ based on cosine similarity, failing to achieve zero base-novel regions and adequate sparsity in base and novel domains. To address these deficiencies, we propose a Non-Negative Generalized Category Discovery (NN-GCD) framework. It employs Symmetric Non-negative Matrix Factorization (SNMF) as a mathematical medium to prove the equivalence of optimal K-means with optimal SNMF, and the equivalence of SNMF solver with non-negative contrastive learning (NCL) optimization. Utilizing these theoretical equivalences, it reframes the optimization of $\bar{A}$ and K-means clustering as an NCL optimization problem. Moreover, to satisfy the non-negative constraints and make a GCD model converge to a near-optimal region, we propose a GELU activation function and an NMF NCE loss. To transition $\bar{A}$ from a suboptimal state to the desired $\bar{A}^*$, we introduce a hybrid sparse regularization approach to impose sparsity constraints. Experimental results show NN-GCD outperforms state-of-the-art methods on GCD benchmarks, achieving an average accuracy of 66.1\% on the Semantic Shift Benchmark, surpassing prior counterparts by 4.7\%.
\end{abstract}

\begin{IEEEkeywords}
    Generalized Category Discovery, Non-negative Matrix Factorization, Clustering Optimization
\end{IEEEkeywords}

\section{Introduction}

\begin{figure}[h] 
    \centering
    \begin{subfigure}[b]{0.45\textwidth}
        \centering
        \includegraphics[width=\textwidth]{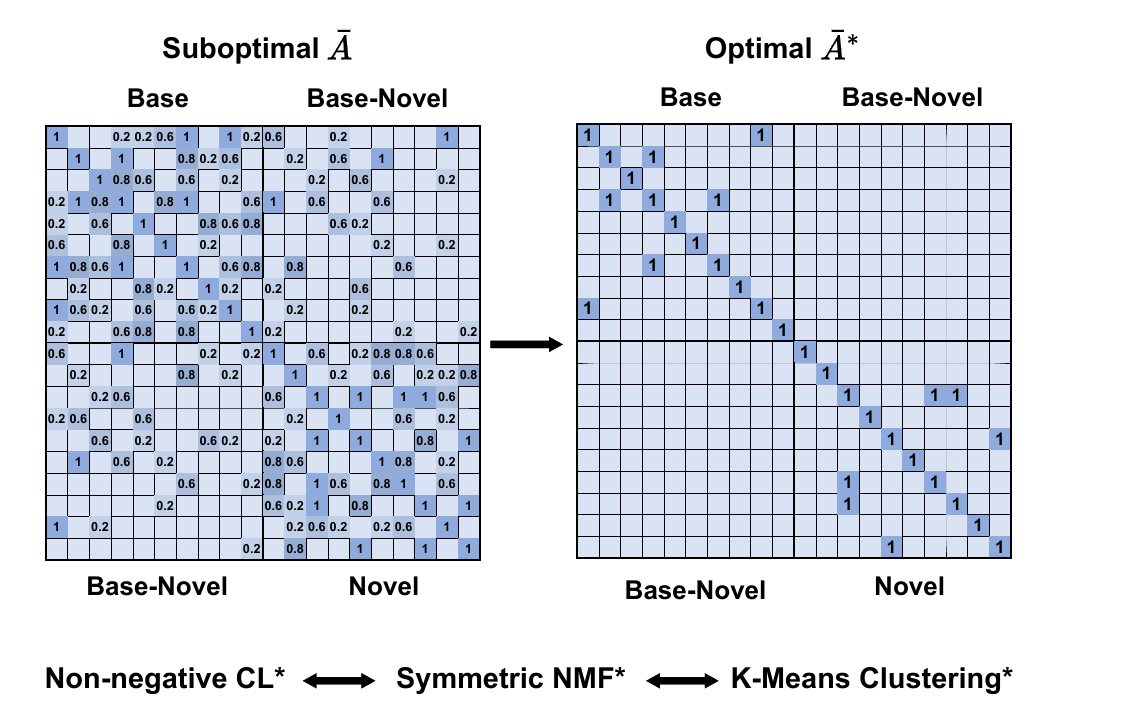}
    \end{subfigure}
    \caption{The comparison of a suboptimal co-occurrence matrix \(\bar{A}\) and its optimally refined counterpart \(\bar{A}^*\), alongside the theoretical underpinnings involved in attaining the optimal \(\bar{A}^*\) in GCD scenarios, including their equivalence. The asterisk \(*\) symbol denotes that the components have been optimized to their respective optimal conditions.}
    \label{fig:main_motivation}
\end{figure}

In recent years, deep learning models \cite{he2016deep, dosovitskiy2020image, caron2021emerging, chan2015pcanet, jin2017deep, zhang2017beyond} have made substantial strides, primarily attributed to the proliferation of large-scale, meticulously annotated datasets. However, these models fundamentally operate under a ``closed-world" assumption, limiting their efficacy in real-world scenarios where data typically encompasses a blend of seen and unseen categories \cite{han2019learning,han2021autonovel,fini2021unified}. To address this limitation, the field of Generalized Category Discovery (GCD) \cite{vaze2022generalized, wen2023parametric} has emerged, focusing on the development of methodologies capable of recognizing both seen and unseen categories utilizing partially labeled data, while eschewing reliance on additional modal information. Despite significant advancements, current GCD approaches \cite{vaze2024no, wang2024sptnet,rastegar2024learn} continue to confront formidable challenges. A particularly salient issue is the suboptimal class-level and domain-level separability achieved when fine-tuning self-supervised pre-trained models through Contrastive Learning(CL). This separability lacks the necessary clarity and orderliness to adequately reflect the sparsity of intra-class similarity within both the base and novel domains, as well as the non-confounding nature within base-novel interaction domains.

To further elucidate, we construct the normalized co-occurrence matrix \(\bar{A}\) in Fig.~\ref{fig:main_motivation} based on the optimized feature space to articulate the cosine similarity between base and novel samples. Previous approaches \cite{chiaroni2023parametric, zhang2023promptcal}, which merely constrain the feature space through contrastive learning, typically achieve a similarity matrix akin to that shown in the top-left corner of Fig.~\ref{fig:main_motivation}. Although these methods yield commendable GCD performance, they fall short of perfection. 
In the GCD scenario, if a model is optimized to its theoretical optimum, an ideal similarity matrix \(\bar{A}^*\), as illustrated in the upper right corner of Fig.~\ref{fig:main_motivation}, must satisfy two key criteria: (1) \textbf{Intra-class Sparsity:} Given that each class constitutes only a small portion of the data in its respective domain, the submatrix regions corresponding to the base and novel domains must exhibit sufficient sparsity. (2) \textbf{Base-Novel Insulation:} The interaction regions between base and novel domains must demonstrate significant dissimilarity, with similarities approaching zero, as only with substantial differences can a GCD model accurately identify them during clustering.

To induce \(\bar{A}\) to possess the desired properties, we focus on the pivotal component among various GCD methods: K-means clustering. As our starting point, K-means clustering performs two-dimensional manifold compression and differentiation on the output feature space. In the GCD scenario, the performance of K-means clustering and the evolution of \(\bar{A}\) are equivalent. During the evolution of \(\bar{A}\) to \(\bar{A}^*\), K-means clustering will also undergo a concurrent transformation, exhibiting a pronounced separation between the base and novel domains and significantly enhancing the class-level separability within each domain. Consequently, a GCD model that produces an optimal K-means clustering can obtain \(\bar{A}^*\).
However, how can this optimal K-means clustering be achieved? Through meticulous mathematical analysis, we have derived two pivotal theorems, denoted as Theorem~\ref{main_theorem1} and Theorem~\ref{main_theorem3}, which prove that in the context of GCD: (1) An optimal K-means clustering can be reformulated as an optimal Symmetric Non-negative Matrix Factorization (SNMF) solver. (2) During optimization, the SNMF solver can be equivalently formulated as a Non-negative Contrastive Learning (NCL) optimization. Under the guidance of these theorems, the evolution of \(\bar{A}\) is transformed into an NCL optimization problem, enabling a GCD model to obtain \(\bar{A}^*\) by minimizing the NCL loss.

Under the guidance of these theorems, we propose the Non-Negative Generalized Category Discovery (NN-GCD) framework. Particularly, to satisfy the basic paradigm of NCL, we introduce the Non-negative Activated Neurons mechanism, using the GELU activation function to alleviate the dead neurons problem. Subsequently, we develop the NMF NCE loss to further optimize the NCL process, enabling the GCD model to converge to a globally optimal region. However, gradient descent generally cannot converge to the theoretical global optimum, meaning \(\bar{A}\) cannot achieve perfect sparsity. Therefore, we propose a Hybrid Sparse Regularization approach, evolving \(\bar{A}\) to \(\bar{A}^*\), thereby achieving sparsity and improving intra-class sparsity and base-novel category insulation.

Our contributions encapsulate the following three pivotal aspects: 1) \textbf{Theoretical Foundation}: We establish a rigorous theoretical equivalence among K-means clustering, SNMF, and NCL. This foundational insight delineates the precise trajectory GCD tasks should pursue—attaining the optimal \(\bar{A}^*\) and the optimal optimization of NCL, thereby solidifying the theoretical underpinnings for our framework. 2) \textbf{NCL Optimization}: Tailored specifically for the NCL optimization within the GCD context, we introduce Non-negative Activated Neurons, NMF NCE loss, and Hybrid Sparse Regularization. These innovations, strategically integrated at both the network architecture and loss function levels, empower GCD models to achieve superior \(\bar{A}\) and novel category discovery capabilities. 3) \textbf{Empirical Validation}: Through comprehensive experimental evaluations and meticulous analyses, we substantiate the efficacy of our NN-GCD framework. Demonstrating unparalleled performance on seven established GCD benchmarks, our framework yields an enhanced \(\bar{A}\) matrix and underscores its practical utility and dominance in real-world applications.

\section{Related Works}
\textbf{Symmetric Non-negative Matrix Factorization (SNMF)} has emerged as a crucial extension of traditional Non-negative Matrix Factorization (NMF) \cite{lee1999learning, zhang2008topology, guan2011manifold, lu2018structurally}, specifically tailored for scenarios where the data matrix is symmetric and non-negative, such as in similarity or adjacency matrices. SNMF decomposes a high-dimensional symmetric matrix into a product of lower-rank, symmetric, and non-negative matrices, thereby enhancing interpretability and robustness in capturing latent structures. The significance of SNMF has been demonstrated across various domains. In clustering \cite{berahmand2024wsnmf, kuang2012symmetric}, SNMF outperforms traditional methods by providing a more nuanced and interpretable decomposition of data relationships. It has also proven effective in data representation \cite{cai2010graph, zong2017multi}, where compact and interpretable representations are essential for downstream tasks. Additionally, SNMF excels in feature selection \cite{guan2011manifold, liu2010non, jing2012snmfca}, enabling the identification of discriminative features under non-negativity constraints, which is crucial for maintaining the interpretability and relevance of the selected features. Furthermore, SNMF has found significant applications in community detection \cite{luo2020highly, luo2021symmetric}, where its symmetric nature is particularly advantageous for identifying densely connected sub-networks within large graphs. These applications highlight SNMF's ability to enhance both the theoretical understanding and practical utility of non-negative matrix factorization techniques, making it a powerful tool in a variety of data-driven fields.

\textbf{Generalized Category Discovery (GCD)}, an extension of the Novel Category Discovery (NCD) paradigm \cite{han2019learning, fini2021unified, han2021autonovel}, addresses the challenge of discovering novel, previously unseen categories while leveraging labeled data from seen categories. GCD aligns with open-world semi-supervised learning \cite{cao2021open, sun2023opencon} by jointly handling both seen and unseen categories in a dynamic, open environment. Current GCD approaches can be broadly classified into two categories: non-parametric and parametric methods. Non-parametric methods \cite{vaze2022generalized, pu2023dynamic, zhang2023promptcal} typically employ clustering techniques, such as K-means, to separate unseen classes, while relying on minimal assumptions about the data distribution. In contrast, parametric methods \cite{chiaroni2023parametric, vaze2024no, wang2024sptnet} incorporate implicit clustering within learned classifiers, enabling more sophisticated modeling of category discovery by learning discriminative features for unseen categories. Despite the widespread adoption of clustering in both explicit and implicit GCD approaches, existing methods have not fully addressed the specific optimization goals of GCD, particularly concerning the nature of \(\bar{A}\) and how K-means clustering should be optimized, thus limiting the full potential. Our NN-GCD framework addresses this limitation both theoretically and practically by transforming K-means clustering into an optimizable NCL problem through SNMF, which provides a more cohesive and effective strategy.

\section{Preliminaries}
In this section, we provide a detailed introduction to the media involved in proving the equivalence of optimal NCL and \(\bar{A}^*\). This exposition primarily focuses on the foundational concepts of Non-negative Matrix Factorization (NMF) and its symmetric variant, SNMF.

\textbf{Non-negative Matrix Factorization} complements clustering by decomposing a non-negative matrix \( V \in \mathbb{R}^{m \times n}_+ \) into the product of two lower-rank non-negative matrices \( W \in \mathbb{R}^{m \times k}_+ \) and \( H \in \mathbb{R}^{k \times n}_+ \), where typically \( k < \min(m,n) \) \cite{lee1999learning}. The canonical NMF optimization problem is formulated as:
\begin{equation}
    \min_{W \geq 0, H \geq 0} \| V - WH \|_F^2,
\end{equation}
where $\| \cdot \|_F^2$ represents the Frobenius norm.

\textbf{Symmetric Non-negative Matrix Factorization} extends NMF to symmetric matrices where \( V = V^T \) \cite{ding2006nonnegative}. SNMF approximates \( V \) as \( HH^T \), with \( H \) being non-negative:
\begin{equation}
    \min_{H \geq 0} \| V - HH^T \|_F^2.
\end{equation}

This formulation inherently aligns with the intrinsic clustering nature of GCD tasks, preserving data symmetry and maintaining inherent relationships. As GCD methodologies advance, integrating SNMF with high-dimensional, multi-modal data presents promising research directions. This synergy enhances our capacity to uncover and interpret complex patterns across diverse domains. SNMF's application in GCD contexts may revolutionize class discovery, particularly where traditional clustering methods face limitations due to data complexity or high dimensionality.

\textbf{Non-negative Contrastive Learning} is a variant of contrastive learning that imposes non-negativity constraints on the network outputs during the optimization process \cite{wang2024non}. The NCL loss function is defined as:
\begin{multline}
\mathcal{L}_{\mathrm{NCL}} (f) = -2 \mathbb{E}_{x, x^{+}} \left[ f_{+}(x)^\top f_{+}(x^{+}) \right] \\
+ \mathbb{E}_{x, x^{-}} \left[ f_{+}(x)^\top f_{+}(x^{-}) \right]^2,
\end{multline}
where \( f_{+}(x) \) denotes the non-negative output of the neural network for input \( x \). 

The first term encourages similar representations for positive pairs, promoting intra-class compactness, while the second term penalizes similar representations for negative pairs, ensuring inter-class separability. NCL aims for sparse and disentangled representations, enhancing the model's generalization ability, which aligns with the requirements of the GCD tasks for generalizing from old to new classes and disentangling features between them.

\section{Theoretical Insights}
In this section, to establish the theoretical foundation for the equivalence among optimal K-means clustering, SNMF, and NCL, we derive Theorem~\ref{main_theorem1} and Theorem~\ref{main_theorem3}.

\subsection{Equivalence of Kernel K-Means and SNMF}
\begin{theorem}[Equivalence of optimal Kernel K-Means \footnote{Applying K-means to the inner products of features constitutes a form of Kernel K-means \cite{guo2021deep}.} and optimal SNMF] \label{main_theorem1}
Let \(\{x_i\}_{i=1}^n\) be a dataset transformed into a Reproducing Kernel Hilbert Space (RKHS) via an optimal neural network $f^*$, resulting in optimal kernel matrix \(A^* = f^*(X)^\top f^*(X)\), which is unnormalized \(\bar{A^*}\). The optimal Kernel K-means objective can be reformulated as:
\begin{equation}
    \small
    \begin{aligned}
       \min \mathcal{L}_{A^*} &= \min(\sum_i \|f^*(x_i)\|^2 - \sum_{k} \sum_{i, j \in C_k}h_k^\top f^*(x_i)^\top f^*(x_j)h_k) \\
        &= \min \operatorname{Tr}(A^{*\top} A^*) - \min \operatorname{Tr}(H^\top A^* H), \\
    \end{aligned}
\end{equation}
where \( H = (h_1, \ldots, h_K) \) with \( h_k^\top h_l = \delta_{kl} \), \( H^T H = I \), and \( A^{*\top} A^* = \text{const} \).

Expanding the optimal kernel k-means loss:
\begin{equation}
    \begin{aligned}
       \min \mathcal{L}_{A^*} &= \text{const} - \min \operatorname{Tr}(H^\top A^* H) \\
       &= \min(\|A^{*}\|^2-2 \operatorname{Tr}(H^\top A^* H) + \|H^T H\|^2). \\
    \end{aligned}
\end{equation}

This is equivalent to an instance of optimal SNMF:
\begin{equation}
\min_{H^\top H = I, H \geq 0} \| A^* - H H^\top \|^2.
\end{equation}
\end{theorem}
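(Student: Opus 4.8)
The plan is to establish the chain of equalities in the theorem statement by connecting the classical Kernel K-means objective to a trace-maximization problem, and then recognizing that trace-maximization over orthonormal nonnegative $H$ is precisely the SNMF objective up to additive constants. I would work with the kernel matrix $A^* = f^*(X)^\top f^*(X)$ throughout, treating $\|f^*(x_i)\|^2 = A^*_{ii}$ and the cluster-indicator structure via the (scaled) assignment matrix $H$ whose columns $h_k$ are mutually orthonormal and nonnegative, so that $H^\top H = I$.

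First I would recall the standard reduction: the Kernel K-means cost $\sum_k \sum_{i \in C_k} \|\phi(x_i) - \mu_k\|^2$, after substituting the optimal centroid $\mu_k = \frac{1}{|C_k|}\sum_{i \in C_k}\phi(x_i)$ and expanding the square, collapses to $\sum_i \|f^*(x_i)\|^2 - \sum_k \frac{1}{|C_k|}\sum_{i,j \in C_k} \langle f^*(x_i), f^*(x_j)\rangle$. Writing the normalized indicator vectors $h_k$ with $(h_k)_i = 1/\sqrt{|C_k|}$ for $i \in C_k$ and $0$ otherwise, the double sum becomes $\sum_k h_k^\top A^* h_k = \operatorname{Tr}(H^\top A^* H)$, which gives the first displayed line. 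Since $\sum_i \|f^*(x_i)\|^2 = \operatorname{Tr}(A^*)$ is a constant independent of the clustering, minimizing $\mathcal{L}_{A^*}$ is equivalent to maximizing $\operatorname{Tr}(H^\top A^* H)$; I would be slightly careful here that the paper writes $\operatorname{Tr}(A^{*\top}A^*)$ rather than $\operatorname{Tr}(A^*)$ — this identification holds under the paper's convention that this feature-norm term is treated as a fixed constant, which I would note explicitly rather than re-derive.

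Next I would expand the Frobenius-norm SNMF objective directly: $\|A^* - HH^\top\|^2 = \|A^*\|^2 - 2\operatorname{Tr}(A^{*\top}HH^\top) + \|HH^\top\|^2$. The cross term satisfies $\operatorname{Tr}(A^{*\top}HH^\top) = \operatorname{Tr}(H^\top A^{*\top} H) = \operatorname{Tr}(H^\top A^* H)$ using symmetry of $A^*$ and cyclicity of the trace; the last term satisfies $\|HH^\top\|^2 = \operatorname{Tr}(HH^\top HH^\top) = \operatorname{Tr}(H^\top H H^\top H) = \operatorname{Tr}(I) = K$ by the orthonormality constraint $H^\top H = I$. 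Hence $\|A^* - HH^\top\|^2 = \|A^*\|^2 - 2\operatorname{Tr}(H^\top A^* H) + K$, and since $\|A^*\|^2$ and $K$ are constants, minimizing this over $\{H : H^\top H = I,\ H \ge 0\}$ is equivalent to maximizing $\operatorname{Tr}(H^\top A^* H)$ — exactly the reduced Kernel K-means problem. Matching this against the second display in the statement (with $\|H^\top H\|^2 = K$ playing the role of the constant term) closes the equivalence.

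The main obstacle, I expect, is not any single calculation but rather pinning down the constraint set and the meaning of "optimal." Standard K-means gives an orthogonal nonnegative $H$ automatically (disjoint clusters), but genuine SNMF only requires $H \ge 0$, not $H^\top H = I$; the theorem finesses this by imposing $H^\top H = I$ on the SNMF side, so I would need to argue that this is the correct relaxation/tightening to make — namely that at the optimum the SNMF factor can be taken orthonormal, or alternatively frame the result as: optimal Kernel K-means equals the orthonormally-constrained SNMF instance. I would also want to handle the normalization carefully, since the statement slides between $A^*$ and $\bar{A}^*$; I would fix the convention (e.g., unit-norm features so $A^*_{ii}=1$ and $\operatorname{Tr}(A^*) = n$) up front so that "$A^{*\top}A^* = \mathrm{const}$" and "$\sum_i\|f^*(x_i)\|^2 = \mathrm{const}$" are both literally true, and then the two chains of equalities align term-by-term.
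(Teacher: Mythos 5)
Your proposal follows essentially the same route as the paper's proof: reduce the Kernel K-means cost to a trace-maximization of $\operatorname{Tr}(H^\top A^* H)$ over orthonormal nonnegative indicator matrices, then expand $\|A^* - HH^\top\|^2$ and observe that the cross term recovers the same trace while the remaining terms ($\|A^*\|^2$ and $\|HH^\top\|^2 = K$) are constants under the orthonormality constraint. You are, if anything, more careful than the paper on two points it glosses over — you correctly note that $\sum_i \|f^*(x_i)\|^2 = \operatorname{Tr}(A^*)$ rather than $\operatorname{Tr}(A^{*\top}A^*)$ (the paper conflates the two, though both are constants for a fixed $f^*$), and you explicitly raise the issue, which the paper concedes only in its final sentence, that imposing $H^\top H = I$ makes this an orthonormally-constrained SNMF instance rather than plain SNMF.
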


\begin{proof}
To demonstrate the equivalence between NMF and K-means clustering in a kernelized setting, consider a dataset $\{x_i\}_{i=1}^n$ mapped into a Reproducing Kernel Hilbert Space (RKHS) through a transformation $f(x)$. The optimization problem for Kernel K-means in this space is expressed as:
\begin{equation}
    \small
    \min \mathcal{L}_{K}(f) = \sum_i \|f(x_i)\|^2 - \sum_{k} \frac{1}{n_k} \sum_{i, j \in C_k} f(x_i)^\top f(x_j),
\end{equation}
where the term $\sum_i \|f(x_i)\|^2$ remains constant for a fixed network architecture and can thus be omitted from the optimization process.

The clustering solution is encoded by the matrix $H$ consisting of $K$ indicator vectors, where:
\begin{equation}
    H = (h_1, \ldots, h_K), \quad h_k^\top h_l = \delta_{kl},
\end{equation}
and each vector $h_k$ is normalized as:
\begin{equation}
    h_k = \left(0, \ldots, 0, \underbrace{1, \ldots, 1}_{n_k \text{}}, 0, \ldots, 0\right)^\top / \sqrt{n_k}.
\end{equation}
This formulation leads to the Kernel K-means objective being rewritten as:
\begin{equation}
    \mathcal{L}_A^* = \operatorname{Tr}(A^{*\top} A^*) - \operatorname{Tr}(H^\top A^* H),
\end{equation}
where \( A^* = f^*(X)^\top f^*(X) \) and \( f^*(X) = [f^*(x_1), \ldots, f^*(x_n)] \).

Since the term $\operatorname{Tr}(A^{*\top} A^*)$ is constant for a fixed network, the objective is to maximize $\operatorname{Tr}(H^\top A^* H)$ subject to $H^\top H = I$ and $H \geq 0$. This is equivalent to solving:
\begin{equation}
    \min_{H^\top H = I, H \geq 0} \| A^* - H H^\top \|^2.
\end{equation}

This problem corresponds to Symmetric Non-negative Matrix Factorization (SNMF), where a non-negative matrix $V \geq 0$ is factorized as $V \approx FF^\top$, with $F \geq 0$. In the context of SNMF, $A^*$ represents the data matrix, and $H H^\top$ is the projection onto the cluster centroids. The orthogonality constraint $H^\top H = I$ enforces the orthogonality of these centroids.

While SNMF may not exactly hold the equivalence to K-means in suboptimal cases, under optimal conditions, SNMF approaches the orthogonality properties required for K-means clustering, leading to their approximate equivalence.
\end{proof}

At this juncture, with the assistance of Theorem~\ref{main_theorem1}, the optimal K-means clustering, which is challenging to optimize during the test phase, can be interpreted as an SNMF problem associated with the output feature space during training. This insight further enables us to transform it into an optimizable form, facilitating the optimization of \(\bar{A}\).

\subsection{Equivalence of SNMF and NCL}
Following prior work \cite{haochen2021provable, wang2024non}, we establish the equivalence between SNMF on \(\bar{A}\) and NCL. Let \(\bar{A} = D^{-1/2} A D^{-1/2} \in \mathbb{R}^{N \times N}_+\) be the normalized adjacency matrix, where \(A\) is the co-occurrence matrix of augmented samples \(x \in X\), with entries \(A_{x,x'} := P(x, x') = \mathbb{E}_{\bar{x}}[\mathcal{A}(x|\bar{x}) \mathcal{A}(x'|\bar{x})]\) representing the co-occurrence probability of samples \(x\) and \(x'\). \footnote{The normalized augmented co-occurrence matrix, formed through data augmentation in contrastive learning, retains the same structure as the original matrix. Hence, both are represented by \(\bar{A}\).} Building on this foundation, we propose Theorem~\ref{main_theorem3}.

\begin{theorem} \label{main_theorem3}
Given a normalized adjacency matrix $\bar{A} \in \mathbb{R}^{N \times N}_+$ representing the probabilities of co-occurrence of augmented data samples, and a representation function $f: \mathbb{R}^d \rightarrow \mathbb{R}^k$ subject to non-negativity constraints $f_+(x) \geq 0$ for all $x \in X$, performing SNMF on $\bar{A}$ is equivalent to the NCL. Specifically, the SNMF problem minimizes the loss:
\begin{equation}
\begin{split}
\mathcal{L}_{\mathrm{SNMF}}(F) &= \left\| \overline{A} - F_+ F^\top_+ \right\|^2 \\
&= \sum_{x, x^{+}} \left( \frac{P(x, x^{+})^2}{P(x)P(x^{+})} \right) \Bigg\vert_{\text{const}} \\
& \quad -2 \mathbb{E}_{x, x^{+}} \left[ f_{+}(x)^\top f_{+}(x^{+}) \right] \\
& \quad + \mathbb{E}_{x, x^{-}} \left[ f_{+}(x)^\top f_{+}(x^{-}) \right]^2\\
&= \mathcal{L}_{\mathrm{NCL}} + \text{const},
\end{split}
\end{equation}
where $F_+\geq0$ and each row vector $(F_{+})_{x,:} = \sqrt{P(x)} f_{+}(x)^\top$. 

Therefore, minimizing $\mathcal{L}_{\mathrm{SNMF}}(F)$ is equivalent to minimizing $\mathcal{L}_{\mathrm{NCL}}(f)$ up to an additive constant.
\end{theorem}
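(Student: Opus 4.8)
The plan is to prove the identity by a direct entry-wise expansion of the squared Frobenius norm $\|\bar{A} - F_+ F_+^\top\|^2$ and then to re-express each resulting double sum as an expectation over the appropriate sampling distribution. First I would record the two entry-wise formulas that follow from the definitions: $\bar{A}_{x,x'} = P(x,x')/\sqrt{P(x)P(x')}$, which comes from $\bar{A} = D^{-1/2} A D^{-1/2}$ together with $D_{xx} = P(x)$ and $A_{x,x'} = P(x,x')$; and $(F_+ F_+^\top)_{x,x'} = \sqrt{P(x)P(x')}\, f_+(x)^\top f_+(x')$, which comes from the prescribed row normalization $(F_+)_{x,:} = \sqrt{P(x)}\, f_+(x)^\top$. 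The non-negativity constraint $f_+(x) \ge 0$ guarantees $F_+ \ge 0$, so that minimizing over such $F$ is a genuine instance of SNMF rather than a merely formal manipulation.

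Next I would expand the square as $\|\bar{A} - F_+ F_+^\top\|^2 = \sum_{x,x'} \bar{A}_{x,x'}^2 - 2\sum_{x,x'} \bar{A}_{x,x'} (F_+ F_+^\top)_{x,x'} + \sum_{x,x'} (F_+ F_+^\top)_{x,x'}^2$. The key simplification occurs in the cross term: the factor $\sqrt{P(x)P(x')}$ coming from $F_+ F_+^\top$ cancels the denominator of $\bar{A}_{x,x'}$, leaving $-2 \sum_{x,x'} P(x,x')\, f_+(x)^\top f_+(x')$. The quadratic term becomes $\sum_{x,x'} P(x)P(x')\, (f_+(x)^\top f_+(x'))^2$, and the first term $\sum_{x,x'} P(x,x')^2/(P(x)P(x'))$ does not involve $f$, so it is precisely the claimed additive constant.

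Finally I would reinterpret the two $f$-dependent sums probabilistically: $P(x,x')$ is by construction the law of a positive pair $(x,x^+)$ induced by the augmentation graph, so the cross term equals $-2\, \mathbb{E}_{x,x^+}[f_+(x)^\top f_+(x^+)]$; and $P(x)P(x')$ is the product of marginals, i.e.\ the law of an independently drawn negative pair $(x,x^-)$, so the quadratic term equals $\mathbb{E}_{x,x^-}[(f_+(x)^\top f_+(x^-))^2]$. Adding the three pieces reproduces $\mathcal{L}_{\mathrm{NCL}}(f)$ plus the constant, which is the assertion. I expect the only real subtlety—rather than a genuine obstacle—to be bookkeeping the sampling conventions: making explicit that the two arguments in the negative term are drawn independently from the marginal, so that the second NCL term is the expectation of the square (as in the spectral contrastive objective of \cite{haochen2021provable}), and that the positive-pair expectation uses the joint $P(x,x')$. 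Once these conventions are fixed, the equivalence is purely algebraic, with non-negativity entering only to certify that the factorization is of SNMF type.
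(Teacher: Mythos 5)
Your proposal is correct and follows essentially the same route as the paper: expand $\|\bar{A} - F_+F_+^\top\|^2$ entry-wise, use $\bar{A}_{x,x'} = P(x,x')/\sqrt{P(x)P(x')}$ and $(F_+F_+^\top)_{x,x'} = \sqrt{P(x)P(x')}\,f_+(x)^\top f_+(x')$ to cancel the normalization factors in the cross term, and reinterpret the resulting sums weighted by $P(x,x')$ and $P(x)P(x')$ as the positive-pair and independent negative-pair expectations of $\mathcal{L}_{\mathrm{NCL}}$. Your version is in fact slightly cleaner than the paper's displayed intermediate step, which contains a typo ($P(x)P(x^+)$ rather than $\sqrt{P(x)P(x^+)}$ in the denominator of the unsquared $\bar{A}$ entry) that you avoid by writing the correct entry-wise formula up front.
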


\begin{proof}
Representation learning aims to train an encoder function $f: \mathbb{R}^d \rightarrow \mathbb{R}^k$ that maps input data $x \in \mathbb{R}^d$ to compact representations $z \in \mathbb{R}^k$. In contrastive learning, positive sample pairs $(x, x^+)$ are generated by augmenting the same sample $\bar{x} \sim \mathcal{P}(\bar{x})$, where augmentation follows the distribution $\mathcal{A}(\cdot|\bar{x})$. Negative samples $x^-$ are drawn independently from the marginal distribution $\mathcal{P}(x)$.

We define the normalized adjacency matrix \(\bar{A} = D^{-1/2} A D^{-1/2}\), where \(A \in \mathbb{R}^{N \times N}_+\) represents the co-occurrence matrix of augmented samples \(x \in X\), with entries \(A_{x,x'} := P(x, x') = \mathbb{E}_{\bar{x}}[\mathcal{A}(x|\bar{x}) \mathcal{A}(x'|\bar{x})]\). The objective of contrastive learning is to align the features of positive pairs while ensuring the separation of features for negative pairs.

The InfoNCE loss is defined as:
\begin{equation}
    \footnotesize
    \begin{split}
    &\mathcal{L}_{\mathrm{NCE}}(f) = -\mathbb{E}_{x, x_{+},\left\{x_{i}^{-}\right\}_{i=1}^{M}} \left( \log \left( \exp \left(f(x)^{\top} f\left(x_{+}\right)\right)\right) \right. \\
    &\quad \left. - \log \left( \exp \left(f(x)^{\top} f\left(x_{+}\right)\right) + \sum_{i=1}^{M} \exp \left(f(x)^{\top} f\left(x_{i}^{-}\right)\right) \right) \right).
    \end{split}
\end{equation}

For theoretical analysis, we use the spectral contrastive loss:
\begin{equation}
    \small
    \mathcal{L}_{\mathrm{sp}}(f) = -2\mathbb{E}_{x,x^+} f(x)^\top f(x^+) + \mathbb{E}_{x,x^-} \left(f(x)^\top f(x^-)\right)^2.
\end{equation}

Imposing a non-negative constraint on the outputs transforms the spectral contrastive loss into a non-negative form:
\begin{equation}\label{NCLloss}
    \small
    \mathcal{L}_{\mathrm{NCL}}(f) = -2 \mathbb{E}_{x, x^+} f_+(x)^\top f_+(x^+) + \mathbb{E}_{x, x^-} \left(f_+(x)^\top f_+(x^-)\right)^2,
\end{equation}
where $f_+(x) \geq 0$ for all $x \in X$.

Given that the normalized co-occurrence matrix $\bar{A}$ is a non-negative symmetric matrix, we apply SNMF to $\bar{A}$ to derive non-negative features $F_+$:
\begin{equation} \label{NMFloss}
    \mathcal{L}_{\mathrm{NMF}}(F) = \left\| \bar{A} - F_+ F_+^\top \right\|^2, \quad F_+ \geq 0.
\end{equation}

The SNMF problem in Eq.~\eqref{NMFloss} is equivalent to the non-negative spectral contrastive loss in Eq.~\eqref{NCLloss}, where $(F_+)_{x,:} = \sqrt{P(x)} f_+(x)^\top$. Expanding the terms:
\begin{equation}
\begin{split}
    \mathcal{L}_{\mathrm{NMF}}(F) 
    &= \sum_{x, x^+} \Bigg( 
        \frac{P(x, x^+)}{P(x)P(x^+)} \\
    &\qquad - \sqrt{P(x)}\, f_+(x)^\top 
        \sqrt{P(x^+)}\, f_+(x^+) 
    \Bigg)^2 \\[2pt]
    &= \sum_{x, x^+} 
        \frac{P(x, x^+)^2}{P(x)P(x^+)} \\ 
    &\quad - 2\, P(x, x^+)\, f_+(x)^\top f_+(x^+) \\ 
    &\quad + P(x)P(x^+)\, 
        \left( f_+(x)^\top f_+(x^+) \right)^2 \\[2pt]
    &= \mathcal{L}_{\mathrm{NCL}} + \text{const}.
\end{split}
\end{equation}

Thus, we have shown that contrastive learning with non-negative constraints performs symmetric non-negative matrix factorization on the normalized adjacency matrix.
\end{proof}

Integrating these findings, we underscore the equivalence between optimal K-means clustering and optimal SNMF and the equivalence between SNMF and NCL during the optimization. This provides a direct pathway for optimizing the underlying mechanisms of GCD models.

\section{Method}
In this section, we detail the proposed NN-GCD framework, which refines existing GCD methodologies.

\textbf{Task Definition:} The GCD task aims to accurately recognize both previously base (old) and novel categories \cite{vaze2022generalized}. The training dataset \(\mathcal{D}\) comprises a labeled subset \(\mathcal{D}_{l}\) and an unlabeled subset \(\mathcal{D}_{u}\). Specifically, the labeled set \(\mathcal{D}_l = \{(x_i, y_i)\}_{i=1}^{N_l} \subset \mathcal{X}_l \times \mathcal{Y}_l\) includes samples from base classes \(\mathcal{Y}_l = \mathcal{C}_{\text{old}}\), while the unlabeled set \(\mathcal{D}_u = \{x_{u_i}\}_{i=1}^{N_u} \subset \mathcal{X}_u\) includes samples from both base and novel classes \(\mathcal{Y}_u = \mathcal{C} = \mathcal{C}_{\text{old}} \cup \mathcal{C}_{\text{new}}\). Here, \(\mathcal{C}\) represents the complete set of categories within \(\mathcal{D}\), which can be predefined or identified through existing methodologies.

\begin{figure}[h] 
    \centering
    \includegraphics[width=0.4\textwidth]{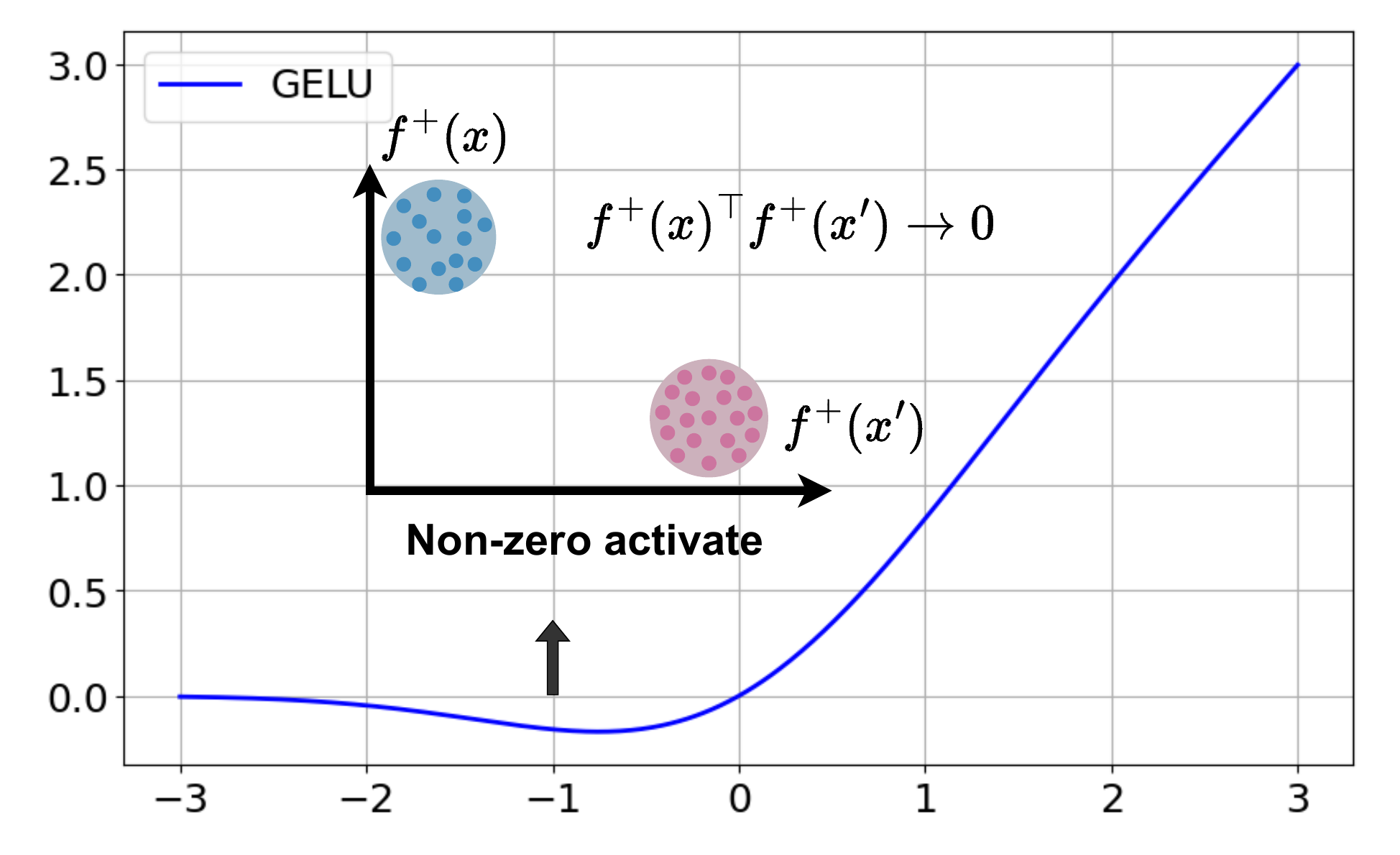}
    \caption{Non-zero activations formed with GELU.}
    \label{fig:main_GELU}
\end{figure}

\subsection{Non-negative Activated Neurons}
In the preceding theoretical section, we elucidate the equivalence between optimal clustering and optimal SNMF, as well as the optimal NCL. It proves that the optimal \(\bar{A}^*\) and clustering performance can be obtained through optimizing NCL. Therefore, in this subsection, we propose a more optimal network architecture based on NCL tailored for the GCD scenario, ensuring the non-negativity constraint.

To enforce this constraint, the neural network outputs are reparameterized. Following a standard neural network encoder \( f \)—comprising a feature extractor \( F \) and a multi-layer perceptron (MLP) projection head \( g \)—as used in contrastive learning, a non-negative transformation \(\phi(\cdot)\) (satisfying \(\phi(x) \ge 0, \forall x\)) is introduced:
\begin{equation}
    f_{+}(x) = \phi(f(x)).
\end{equation}

Potential choices of $\phi$ include ReLU, sigmoid, etc. We find that the simple ReLU function, $\text{ReLU}(x) = \max(x, 0)$, compared to other operators with non-negative outputs, imposes a very stringent constraint through its zero-output effect for inputs less than zero. 

Although ReLU-activated features exhibit absolute non-negativity, they often encounter the issue of dead neurons during training \cite{lu2019dying}. During NCL optimization, the similarity between samples of different classes is driven to zero: \( f^+(x)^\top f^+(x') \rightarrow 0 \). However, the problem of dead neurons reduces the feature representation capability, forming all-zero activations that lead to all-zero features. This results in \(\bar{A}\) being sufficiently sparse but with weak class separability, significantly hindering the generalization required for GCD tasks and causing confusion among different classes.
To solve this problem, we propose to use GELU\cite{hendrycks2016gaussian}, which can be considered a smooth form of ReLU:
\begin{equation} 
\text{GELU}(x) = x \cdot \Phi(x) = x\cdot\frac{1}{2} \left( 1 + \operatorname{erf}\left( \frac{x}{\sqrt{2}} \right)\right),
\end{equation}
where \(\Phi(x)\) is the cumulative distribution function of the standard normal distribution.

According to \cite{lee2023mathematical}, unlike ReLU, GELU exhibits Lipschitz continuity and smoothness, which mitigates the dead neuron issue and benefits model generalization \cite{virmaux2018lipschitz}. As shown in Fig.~\ref{fig:main_GELU}, GELU preserves the input values, enabling non-zero activations during optimization. This allows \( f^+(x) \) and \( f^+(x') \) to form non-zero orthogonal solutions. The non-zero orthogonality of features from different classes ensures that inter-class cosine similarity approaches zero while intra-class cosine similarity remains close to one, thereby preventing feature collapse, especially between base and novel classes. Ultimately, this achieves class separability of \(\bar{A}^*\), significantly enhancing GCD performance.

To more reliably generate non-zero activations, we stop the gradient updates for the teacher head \(g_t\) and update the teacher head parameters \(\Theta_t = \{\theta_t, m_t\}\) \cite{grill2020bootstrap} using the Exponential Moving Average (EMA) method, specifically updating \(\Theta_t\) as \(\Theta_t = \omega(t)\Theta_t + (1 - \omega(t))\Theta\), where \(\Theta\) refers to the trainable parameters of the student head \(g_s\). The hyperparameter \(\omega(t)\) is updated using a cosine decay schedule, defined as \(\omega(t) = \omega_{\text{max}} - (1 - \omega_{\text{min}})\cos\left(\frac{\pi t}{T + 1}\right)/2\).

\subsection{NMF NCE Loss}
The introduction of the Non-negative Activated Neurons mechanism enforces the non-negative output constraint in NCL, enabling the use of contrastive learning-based loss functions and gradient descent to find the optimal \(\bar{A}^*\). To learn the representations of the data, we follow SimGCD \cite{wen2023parametric} and combine self-supervised and supervised learning. We combine the SimCLR \cite{chen2020simple} loss and the cross-entropy loss between predictions and pseudo labels:
\begin{equation}
\mathcal{L}_{\mathit{SSL}} = \mathcal{L}_{\mathit{SimCLR}} + \mathcal{L}_{\mathit{pseudo}}.
\end{equation}

For labeled data, we use a supervised learning loss that combines the SupCon \cite{khosla2020supervised} loss and the cross-entropy loss between predicted and ground-truth labels:
\begin{equation}
\mathcal{L}_{\mathit{SL}} = \mathcal{L}_{\mathit{SupCon}} + \mathcal{L}_{\mathit{CE}}.
\end{equation}

We calculate the SimCLR loss as follows:
\begin{equation}
\mathcal{L}_{\mathit{SimCLR}} = -\frac{1}{|B|} \sum_{i \in B} \log \frac{\exp(h_i^\top h_i' / \tau)}{\sum_{i \neq n} \exp(h_i^\top h_n' / \tau)},
\end{equation}
where \(x_i\) and \(x_i'\) are two views of the same image in a mini-batch \(B\), \(h_i = g(F(x_i))\), and \(\tau\) is a temperature value. \(F\) is the feature extractor ViT, and \(g\) is the MLP projection head.

The SupCon loss is calculated as follows:
\begin{equation}
\small
\mathcal{L}_{\mathit{SupCon}} = -\frac{1}{|B_L|} \sum_{i \in B_L} \frac{1}{|N_i|} \sum_{q \in N_i} \log \frac{\exp(h_i^\top h_q' / \tau)}{\sum_{i \neq n} \exp(h_i^\top h_n' / \tau)},
\end{equation}
where \(B_L\) is the labeled subset of \(B\), and \(N_i\) is the set of indices of images sharing the same label as \(x_i\).

\begin{figure}[h!] 
    \centering
    \begin{subfigure}[b]{0.23\textwidth}
        \centering
        \includegraphics[width=\textwidth]{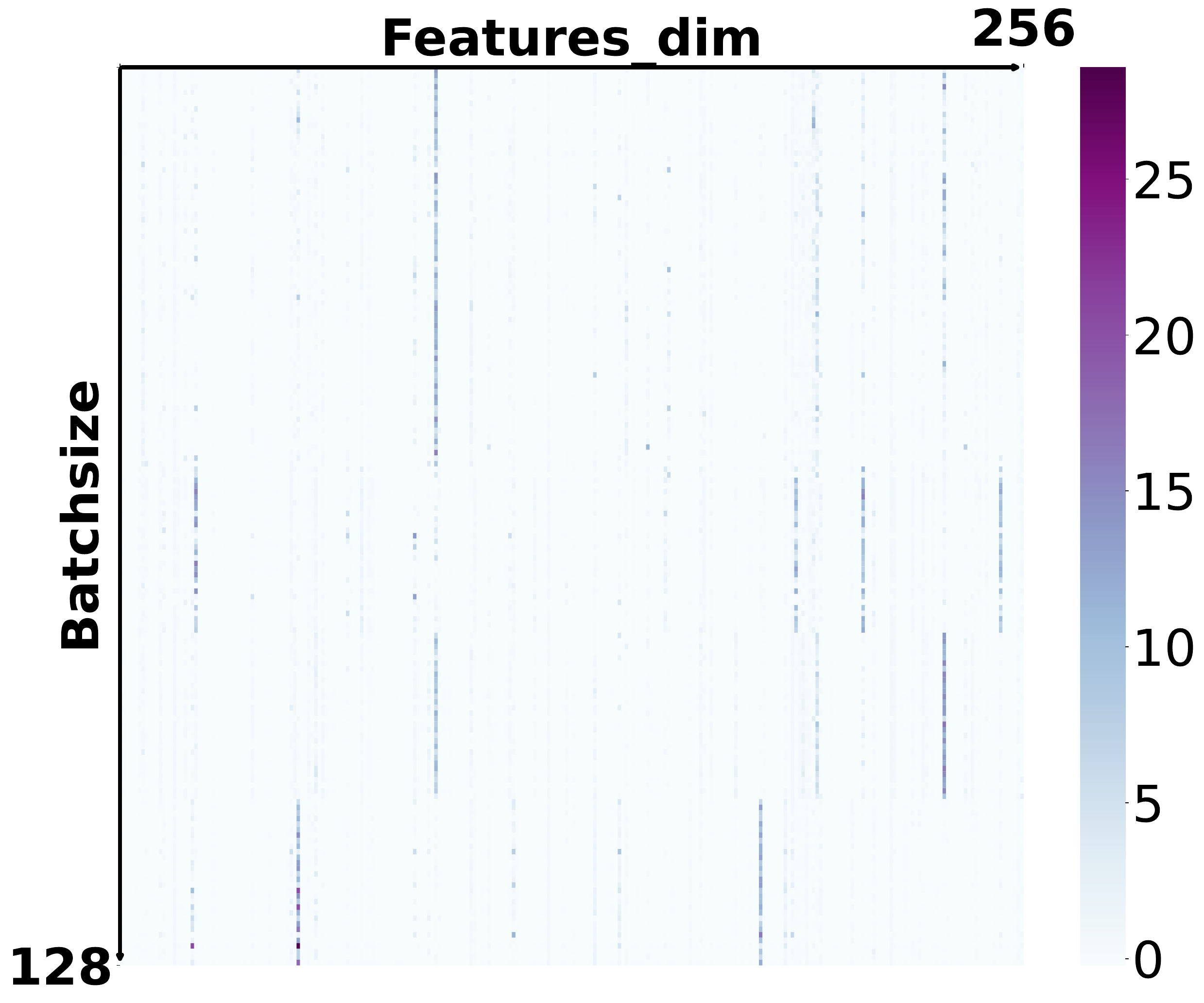}
    \end{subfigure}
    \hspace{0.0001\textwidth}
    \begin{subfigure}[b]{0.23\textwidth}
        \centering
        \includegraphics[width=\textwidth]{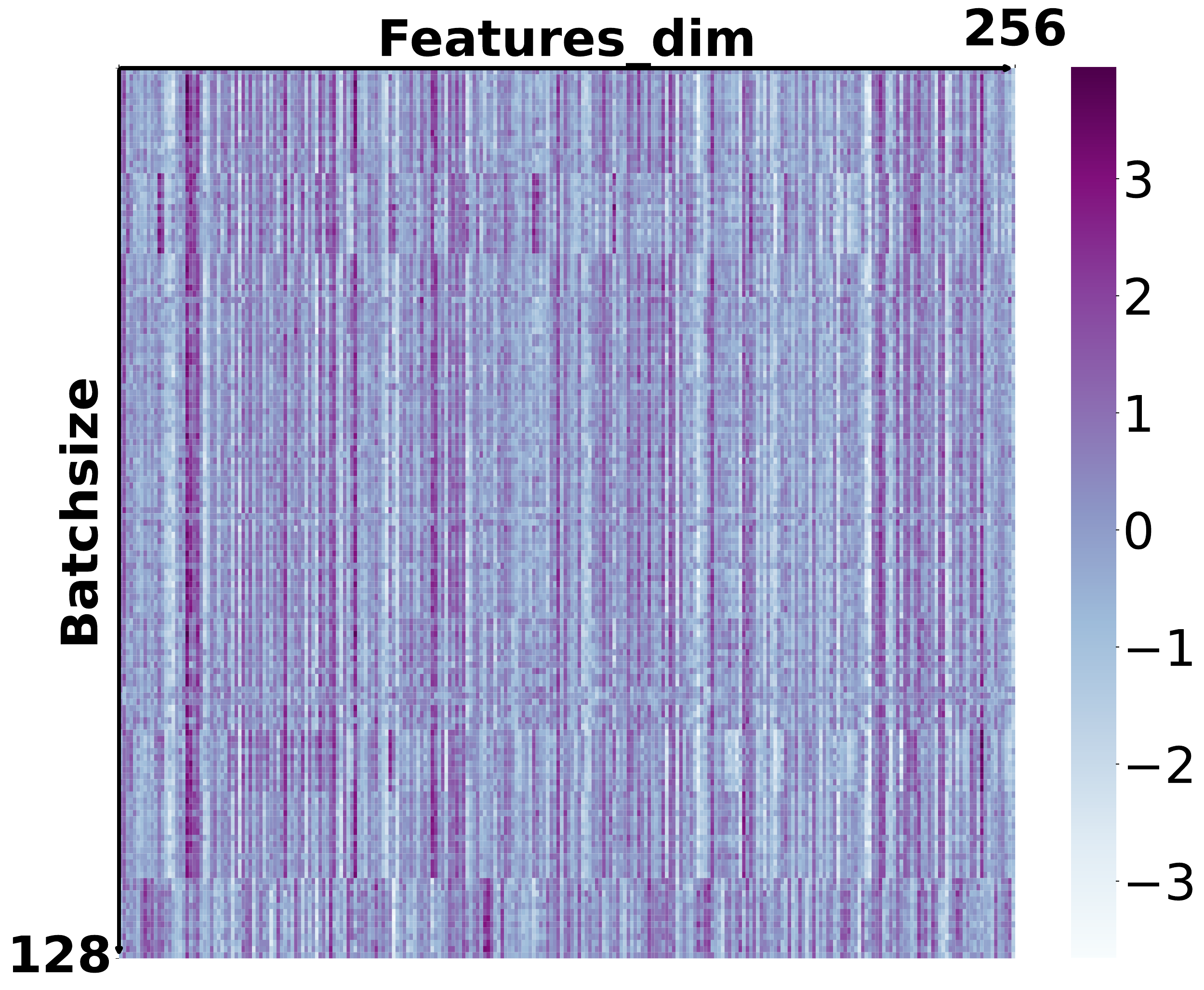}
    \end{subfigure}
    \hspace{0.0001\textwidth}
    \caption{An analytical visualization contrasting the features extracted via our proposed methodology with those obtained through SimGCD \cite{wen2023parametric}.}
    \label{fig:main}
\end{figure}

\begin{figure}[h!] 
    \centering
    \begin{subfigure}[b]{0.23\textwidth}
        \centering
        \includegraphics[width=\textwidth]{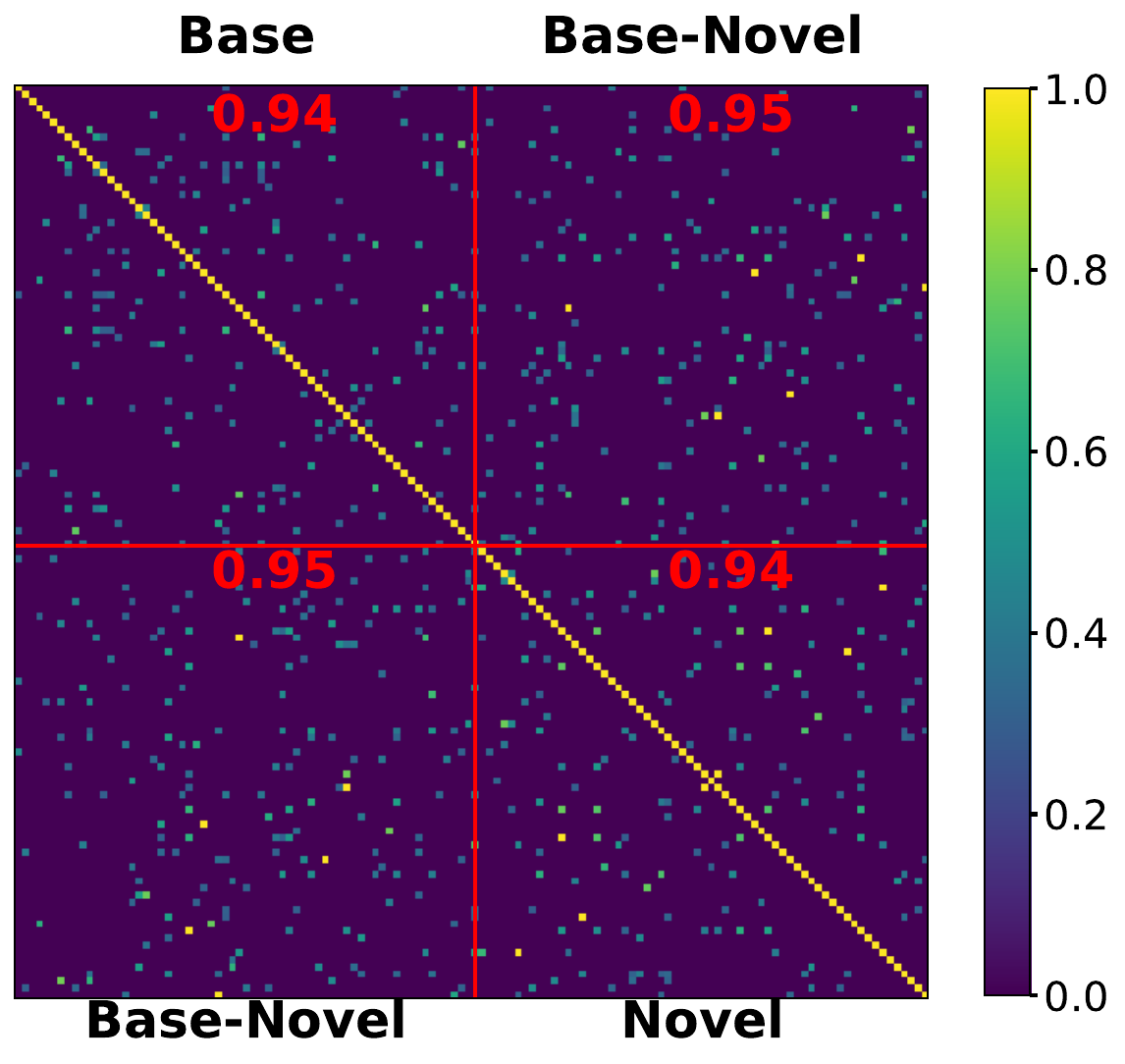}
    \end{subfigure}
    \hspace{0.0001\textwidth}
    \begin{subfigure}[b]{0.23\textwidth}
        \centering
        \includegraphics[width=\textwidth]{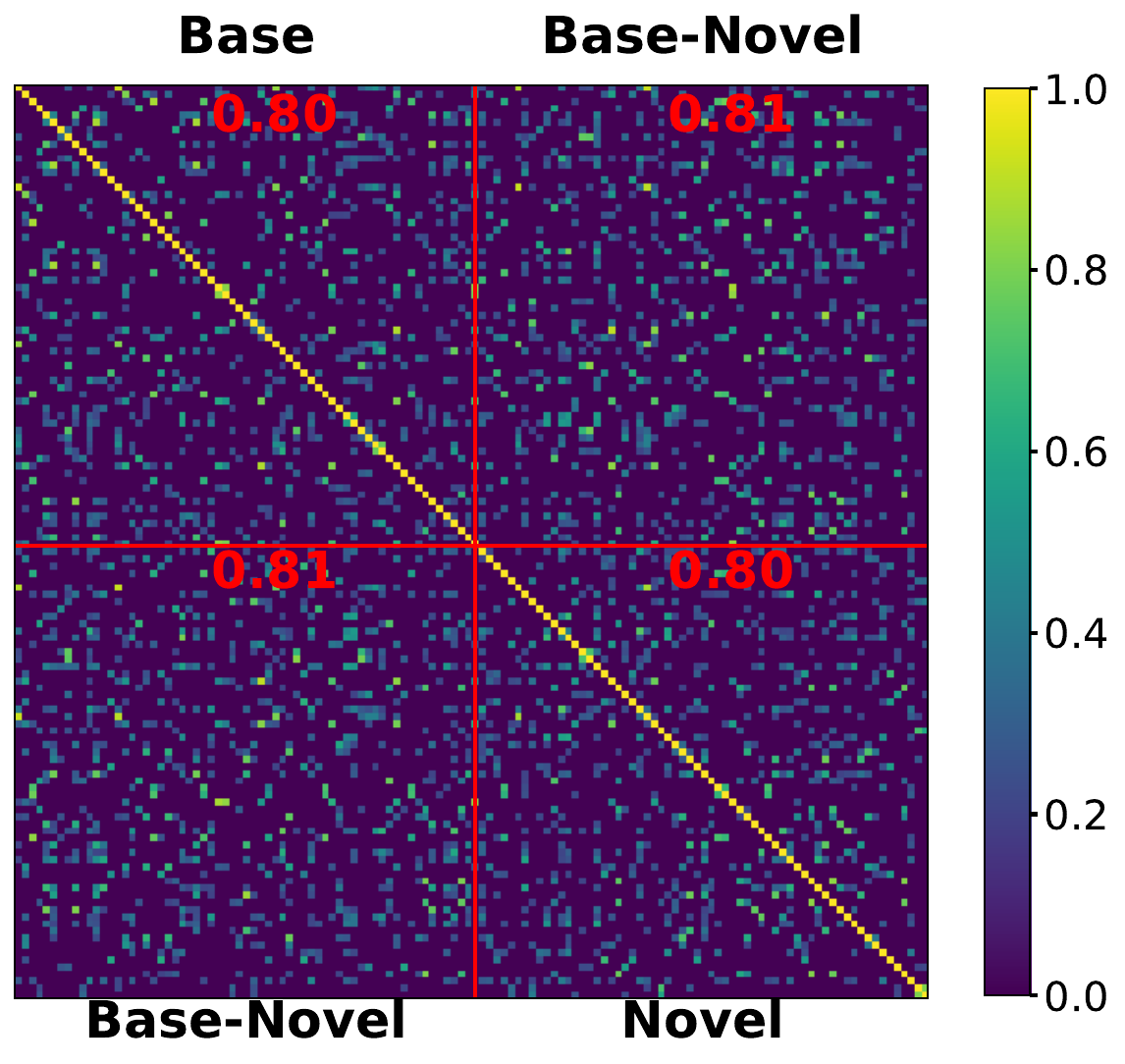}
    \end{subfigure}
    \caption{A comparative visualization of the \(\bar{A}\) matrix computed from features extracted by our NN-GCD versus those derived using SimGCD \cite{wen2023parametric}.}
    \label{fig:main_matrix}
\end{figure}

We utilize two additional loss functions to train the feature extractor, following \cite{wen2023parametric} and applied in a self-distillation fashion \cite{assran2022masked}. Specifically, for a total number of categories \( K \), we randomly initialize a set of prototypes \( C = \{c_1, \dots, c_K\} \), each representing one category. During training, we calculate the soft label for each augmented view \( x_i \) by performing softmax on the cosine similarity between the hidden feature \( z_i = F(x_i) \) and the prototypes \( C \), scaled by \( 1/\tau_s \):
\begin{equation}
p_i^{(k)} = \frac{ \exp\left( (1/\tau_s) \left( \left( z_i / \| z_i \|_2 \right)^\top \left( c_k / \| c_k \|_2 \right) \right) \right) }{ \displaystyle\sum_{k'} \exp\left( (1/\tau_s) \left( \left( z_i / \| z_i \|_2 \right)^\top \left( c_{k'} / \| c_{k'} \|_2 \right) \right) \right)}.
\end{equation}

The soft pseudo-label \( q'_i \) is generated from another view \( x_i \) using a sharper temperature \( \tau_t \). The loss function for predictions and pseudo-labels is the cross-entropy loss:
\begin{equation}
\mathcal{L}_{\mathit{pseudo}} = \frac{1}{|B|} \sum_{i \in B} \ell(q'_i, p_i) - \epsilon H(p).
\end{equation}

For known labels, the loss is defined as:
\begin{equation}
\mathcal{L}_{\mathit{CE}} = \frac{1}{|B_L|} \sum_{i \in B_L} \ell(y_i, p_i),
\end{equation}
where \( y_i \) denotes the one-hot label of \( x_i \). 

For the unsupervised objective, we also adopt a mean-entropy maximization regularizer \cite{assran2022masked}. Here, \( p = \frac{1}{2|B|} \sum_{i \in B} (p_i + p'_i) \) denotes the mean prediction of a batch, and the entropy is defined as:
\begin{equation}
H(p) = -\sum_k p(k) \log p(k).
\end{equation}

Although most existing methods \cite{vaze2022generalized, wen2023parametric} utilize the InfoNCE loss in $L_{\mathit{SimCLR}}$, it has certain limitations.

\textbf{Overemphasis on extreme samples}: The InfoNCE loss often assigns disproportionately large weights to the hardest negative samples with the highest similarity, causing the model to overfit to these extreme samples during training and potentially overlook other equally informative samples.

\textbf{Sensitivity to outliers}: The InfoNCE loss is highly sensitive to outliers, as these outliers can be assigned excessively high weights when they appear in regions of high similarity, causing significant fluctuations in training performance.

Based on these limitations, we develop \textbf{NMF NCE}:
\small
\begin{equation}
    \omega(f(x), \mu, \sigma) \propto \frac{1}{\sigma \sqrt{2\pi}} \exp \left[ -\frac{1}{2} \left( \frac{f(x) - \mu}{\sigma} \right)^2 \right],
\end{equation}
\begin{equation}
    \begin{aligned}
        \mathcal{L}_{\text{NNCE}} = &- \mathbb{E}_{x, x^{+}} \left[ \frac{f(x^+)}{\tau} \right]\\ 
        &+ \log \mathbb{E}_{x, x^{-}} \left[ \omega(f(x), \mu, \sigma) \frac{e^{f(x)/\tau}}{\mathbb{E} \left[ \omega(f(x), \mu, \sigma) \right]} \right],
    \end{aligned} 
\end{equation}
\normalsize
where \(\mu\) and \(\sigma\) are controllable hyperparameters. \(\mu\) defines the central region of the weight distribution, assigning greater weights to samples nearer to it. \(\sigma\) controls the distribution's peak height, with a smaller \(\sigma\) resulting in more pronounced weight differences between samples.

The NMF NCE loss strategically assigns weights to negative instances, enabling the model to prioritize primary samples over outliers. This methodological approach mitigates confusion between the base and novel domains, enhances class separability, and achieves base-level insulation.

\subsection{Hybrid Sparse Regularization}
The optimization target \(\bar{A}^*\) in NCL is highly sparse, necessitating the minimization of similarity among dissimilar negative samples (\(f^+(x)^\top f^+(x') \rightarrow 0\)). This results in \(f^+(x)\) and \(f^+(x')\) forming non-zero orthogonal solutions, leading to zero activation in the irrelevant dimensions of the features, thus requiring inherent sparsity in NCL features. Imposing sparsity constraint regularization on NCL maintains feature sparsity during optimization, enabling better convergence to the optimal solution. To achieve this, we propose the Hybrid Sparse Regularization:
\begin{equation}
\mathcal{L}_{HSR}(W) =\gamma(\beta \|W\|_1 + (1-\beta) (\|W\|_{2,1} - \|W\|_F^2)),
\end{equation}
Here, \(\beta\) and \(\gamma\) act as balancing parameters. The regularization term mitigates the deficiencies of the \(\mathbf{l}\)-norm and handles feature redundancy, a common issue with the \(\mathbf{l}_{2,1}\)-norm.

The purpose of $\|W\|_1$ is to enhance the sparsity of the parameter matrix, while the subsequent term$(\|W\|_{2,1} - \|W\|_F^2)$, maintaining row sparsity while controlling the overall energy of the matrix. The modified \( l_{2,1} \) norm, \( \|W\|_{2,1} \), aims to constrain row sparsity, thereby reducing redundancy, while the squared Frobenius norm, \( \|W\|_F^2 \), controls the overall energy of the parameter matrix.

The total loss function for NN-GCD is then equal to:
\begin{equation}
    \mathcal{L}_{\mathit{NN-GCD}} = (1-\lambda)\mathcal{L}_{\mathit{SSL}} + \lambda \mathcal{L}_{\mathit{SL}} + \mathcal{L}_{HSR}(W).
\end{equation}

By employing the aforementioned methods, the optimal features and \(\bar{A}^*\) are achieved. The results are visualized as follows: Figure~\ref{fig:main} highlights our method's non-negative, sparse features, marked by mostly zero eigenvalues and selective large positives. By refining NCL through a specialized loss function and sparse regularization, we achieve optimal \(\bar{A}^*\) with class-separable features. Additionally, Figure~\ref{fig:main_matrix} underscores superior sparsity in both base and novel regions, nearing unity in the base-novel area, thus outperforming predecessors.

\begin{table}[h!]
\centering
\begin{tabular}{cccccc}
\toprule
\textbf{Dataset} & \textbf{Balance} & \multicolumn{2}{c}{\textbf{Labelled}} & \multicolumn{2}{c}{\textbf{Unlabelled}} \\ 
\cmidrule(lr){3-4} \cmidrule(lr){5-6}
& & \#Image & \#Class & \#Image & \#Class \\ 
\midrule
CIFAR10 & \checkmark & 12.5K & 5 & 37.5K & 10 \\ 
CIFAR100 & \checkmark & 20.0K & 80 & 30.0K & 100 \\ 
ImageNet-100 & \checkmark & 31.9K & 50 & 95.3K & 100 \\ 
CUB & \checkmark & 1.5K & 100 & 4.5K & 200 \\ 
Stanford Cars & \checkmark & 2.0K & 98 & 6.1K & 196 \\ 
FGVC-Aircraft & \checkmark & 1.7K & 50 & 5.0K & 100 \\ 
Herbarium 19 & $\times$ & 8.9K & 341 & 25.4K & 683 \\ 
\bottomrule
\end{tabular}
\caption{Statistics of the datasets we evaluate on.}
\label{tab: datasets introduce}
\end{table}

\begin{table*}[h!]
\centering
\caption{Category discovery accuracy (ACC) on the Semantic Shift Benchmark. The reported results include prior work using DINO initialization, alongside reimplemented GCD baselines, SimGCD, and $\mu$GCD with DINOv2 pre-training (indicated with *). The highest-performing results are highlighted in bold, and the second-best results are underlined.}
\fontsize{8.5}{11}\selectfont 
\begin{tabular}{l>{\columncolor{Ocean}}ccc>{\columncolor{Ocean}}ccc>{\columncolor{Ocean}}ccc>{\columncolor{Ocean}}cc}
\toprule
\multicolumn{1}{c}{\textbf{method}} & \multicolumn{3}{c}{\textbf{CUB}} & \multicolumn{3}{c}{\textbf{Stanford Cars}} & \multicolumn{3}{c}{\textbf{Aircraft}} & \multicolumn{1}{c}{\textbf{Avg}}\\
\cmidrule(lr){2-4} \cmidrule(lr){5-7} \cmidrule(lr){8-10}
& \textbf{All} & \textbf{Old} & \textbf{New} & \textbf{All} & \textbf{Old} & \textbf{New} & \textbf{All} & \textbf{Old} & \textbf{New} & \textbf{All} \\
\midrule
GCD \cite{vaze2022generalized} & 31.3 & 56.6 & 48.7 & 39.0 & 57.6 & 29.9 & 45.0 & 41.1 & 46.9 & 45.1\\
XCon \cite{fei2022xcon} & 52.1 & 54.3 & 51.0 & 40.5 & 58.8 & 31.7 & 47.7 & 44.4 & 49.4 & 46.8\\
DCCL \cite{pu2023dynamic} & 63.5 & 60.8 & 64.9 & 43.1 & 55.7 & 36.2 & - & - & - & - \\
PIM \cite{chiaroni2023parametric} & 62.7 & 75.7 & 56.2 & 43.1 & 66.9 & 31.6 & 42.3 & 56.1 & 34.8 & 49.4\\
CiPR \cite{hao2023cipr} & 57.1 & 58.7 & 55.6 & 47.0 & 61.5 & 40.1 & 36.8 & 45.4 & 32.6 & 50.0 \\
PromptCAL \cite{zhang2023promptcal} & 62.9 & 64.4 & 62.1 & 50.2 & 70.1 & 40.6 & 52.2 & 52.2 & 52.3 & 55.1\\
InfoSieve \cite{rastegar2024learn} & \underline{69.4} & \textbf{77.9} & \underline{65.2} & 56.3 & 63.7 & \underline{52.5} & 55.7 & \textbf{74.8} & 46.4 & 60.5\\ 
SimGCD \cite{wen2023parametric} & 60.3 & 65.6 & 57.7 & 53.8 & 71.9 & 45.0 & 54.2 & 59.1 & 51.8 & 56.1\\
$\mu$GCD \cite{vaze2024no} & 65.7 & 68.0 & 64.6 & 56.5 & 68.1 & 50.9 & 53.8 & 55.4 & 53.0 & 58.7\\
SPTnet \cite{wang2024sptnet} & 65.8 & 68.8 & 65.1 & \underline{59.0} & \textbf{79.2} & 49.3 & \underline{59.3} & 61.8 & \underline{58.1} & \underline{61.4} \\
\textbf{NN-GCD} (Ours) & \textbf{72.4} & \underline{76.3} & \textbf{70.5} & \textbf{65.4} & \underline{77.5} & \textbf{59.5} & \textbf{60.5} & \underline{65.3} & \textbf{58.2} & \textbf{66.1}\\
\midrule
k-means* & 67.6 & 60.6 & 71.1 & 29.4 & 24.5 & 31.8 & 18.9 & 16.9 & 19.9 & 38.6\\
GCD* \cite{vaze2022generalized} & 71.9 & 71.2 & 72.3 & 67.5 & 67.8 & 64.7 & 55.4 & 47.9 & 59.2 & 64.3\\

SimGCD* \cite{wen2023parametric} & 71.5 & \underline{78.1} & 68.3 & 71.3 & 81.9 & 66.6 & 63.9 & \underline{69.9} & 60.9 & 69.0\\
$\mu$GCD* \cite{vaze2024no} & \underline{74.0} & 75.9 & \underline{73.1} & \underline{76.1} & \textbf{91.0} & \underline{68.9} & \underline{66.3} & 68.7 & \underline{65.1} & \underline{72.1}\\
\textbf{NN-GCD}* (Ours) & \textbf{83.1} & \textbf{82.5} & \textbf{83.4} & \textbf{80.4} & \underline{88.9} & \textbf{76.3} & \textbf{75.6} & \textbf{78.2} & \textbf{74.3} & \textbf{79.7}  \\
\bottomrule
\end{tabular}
\label{tab:semantic_shift_benchmark}
\end{table*}

\begin{table*}[h!]
\centering
\caption{Category discovery accuracy (ACC) on the generic image recognition datasets. Results are derived from prior work utilizing DINO initialization. The highest-performing results are highlighted in bold, and the second-best results are underlined.}
\fontsize{8.5}{11}\selectfont 
\begin{tabular}{l>{\columncolor{Ocean}}ccc>{\columncolor{Ocean}}ccc>{\columncolor{Ocean}}ccc>{\columncolor{Ocean}}c}
\toprule
\multicolumn{1}{c}{\textbf{Method}} & \multicolumn{3}{c}{\textbf{CIFAR10}} & \multicolumn{3}{c}{\textbf{CIFAR100}} & \multicolumn{3}{c}{\textbf{ImageNet-100}} & \multicolumn{1}{c}{\textbf{Average}}\\
\cmidrule(lr){2-4} \cmidrule(lr){5-7} \cmidrule(lr){8-10}
\multicolumn{1}{c}{} &\textbf{All} & \textbf{Old} & \textbf{New} & \textbf{All} & \textbf{Old} & \textbf{New} & \textbf{All} & \textbf{Old} & \textbf{New} & \textbf{All} \\
\midrule
GCD \cite{vaze2022generalized} & 91.5 & \textbf{97.9} & 88.2 & 70.8 & 77.6 & 57.0 & 74.1 & 89.8 & 66.3 & 78.8 \\
DCCL \cite{pu2023dynamic} & 96.3 & 96.5 & 96.9 & 75.3 & 76.8 & 70.2 & 80.5 & 90.5 & 76.2 & 84.0 \\
XCon \cite{fei2022xcon} & 96.0 & 97.3 & 95.4 & 74.2 & 81.2 & 60.3 & 77.6 & 93.5 & 69.7 & 82.6\\
PIM \cite{chiaroni2023parametric} & 94.7 & 97.4 & 93.3 & 78.3 & \underline{84.2} & 66.5 & 83.1 & \textbf{95.3} & 77.0 & 85.3 \\
CiPR \cite{hao2023cipr} & \underline{97.7} & 97.5 & 97.7 & \textbf{81.5} & \underline{82.4} & \textbf{79.7} & 80.5 & 84.9 & 78.3 & 86.6 \\
PromptCAL \cite{zhang2023promptcal} & \textbf{97.9} & 96.6 & \underline{98.5} & 81.2 & 84.2 & 75.3 & 83.1 & 92.7 & 78.3 & 87.7\\
InfoSieve \cite{rastegar2024learn} & 94.8 & \underline{97.7} & 93.4 & 78.3 & 82.2 & 70.5 & 80.5 & \underline{93.8} & 73.8 & 84.5 \\

SimGCD \cite{wen2023parametric} & 97.1 & 95.1 & 98.1 & 80.1 & 81.2 & 77.8 & 83.0 & 93.1 & 77.9 & 86.7\\

SPTnet \cite{wang2024sptnet} & 97.3 & 95.0 & \textbf{98.6} & \underline{81.3} & \textbf{84.3} & 75.6 & \textbf{85.4} & 93.2 & \underline{81.4} & \textbf{88.0} \\ 
\midrule
\textbf{NN-GCD} (Ours) &  97.3 & 95.0 & \textbf{98.6} & 81.2 & 82.4 & \underline{78.8} & \underline{85.2} & 91.5 & \textbf{82.0} & \underline{87.9} \\
\bottomrule
\end{tabular}
\label{tab:generic_image_recognition_datasets}
\end{table*}

\begin{table}[h!]
\centering
\caption{Category discovery accuracy (ACC) on the more challenging datasets. Results are derived from prior work utilizing DINO initialization. The best results are highlighted in bold, and the second-best results are underlined.}
\fontsize{10}{11}\selectfont 
\begin{tabular}{l>{\columncolor{Ocean}}cccc}
\toprule
\multicolumn{1}{c}{\textbf{Method}} & \multicolumn{3}{c}{\textbf{Herbarium 19}} \\
\cmidrule(lr){2-4} 
\multicolumn{1}{c}{} & \textbf{All} & \textbf{Old} & \textbf{New}  \\
\midrule
GCD \cite{vaze2022generalized} & 35.4 & 51.0 & 27.0 \\
CiPR \cite{hao2023cipr} & 36.8 & 45.4 & 32.6 \\
PIM \cite{chiaroni2023parametric} & 42.3 & 56.1 & 34.8  \\
PromptCAL \cite{zhang2023promptcal} & 37.0 & 52.0 & 28.9 \\
InfoSieve \cite{rastegar2024learn} & 41.0 & 55.4 & 33.2 \\ 
SimGCD \cite{wen2023parametric} & \textbf{44.0} & \underline{58.0} & \textbf{36.4} \\
SPTnet \cite{wang2024sptnet} & 43.4 & \textbf{58.7} & 35.2 \\
\midrule
\textbf{NN-GCD} &\underline{43.5} & \textbf{58.7} & \underline{36.0} \\
\bottomrule
\end{tabular}
\label{tab:herbarium19}
\end{table}

\section{Experiments}
\textbf{Datasets.} In this section, we validate the effectiveness of our method through extensive testing on seven diverse image recognition benchmarks. These benchmarks include the Semantic Shift Benchmark (SSB), featuring fine-grained datasets: CUB \cite{welinder2010caltech}, Stanford Cars \cite{krause20133d}, and Aircraft \cite{maji2013fine}, generic object recognition datasets: CIFAR-10/100 \cite{krizhevsky2009learning}, and ImageNet-100 \cite{tian2020contrastive}, as well as the more challenging dataset Herbarium 19\cite{tan2019herbarium}. The SSB introduces complexities with fine-grained categories, challenging method robustness in realistic settings. For a fair comparison, we follow the prevalent GCD setting \cite{vaze2022generalized}, dividing each dataset into labeled and unlabeled subsets. Specifically, 80\% of CIFAR-100 and 50\% of other datasets' base (old) samples form \(\mathcal{D}_{l}\), with the remainder constituting \(\mathcal{D}_{u}\). As shown in Tab.~\ref{tab: datasets introduce}, both the SSB and generic object recognition datasets are balanced datasets. However, the number of samples per class in the SSB is relatively smaller compared to the generic object recognition datasets. Herbarium 19 introduces the challenge of data imbalance, where there are many categories with few samples, while others have abundant samples, resulting in a long-tail distribution that makes it difficult for the model to learn features of those underrepresented categories. 

\textbf{Evaluation protocol.} We evaluate the model performance with clustering accuracy (ACC) following standard practice \cite{vaze2022generalized}. During evaluation, given the ground truth $y^*$ and the predicted labels $\hat{y}$, the ACC is calculated as:
\begin{equation}
    \text{ACC} = \frac{1}{M} \sum_{i=1}^{M} \mathbf{1}(y^*_i = p(\hat{y}_i)),
\end{equation}
where $M = |\mathcal{D}_u|$, and $p$ is the optimal permutation that matches the predicted cluster assignments to the ground truth class labels.

\textbf{Implementation details.} Following \cite{vaze2022generalized, wen2023parametric}, we adopt ViT-B/16 as the backbone network and use the [CLS] token with a dimension of 768 from the encoder output as the feature representation. The network is either pre-trained using DINOv1 \cite{caron2021emerging} on unlabelled ImageNet 1K \cite{krizhevsky2012imagenet}, or pre-trained using DINOv2 \cite{oquab2023dinov2} on unlabelled ImageNet 22K. We use a batch size of 128 for training, and the training lasts for 200 epochs. The learning rate is set to 0.1 and decayed using the cosine schedule \cite{loshchilov2016sgdr}. The EMA update schedule follows \cite{vaze2024no}, where $\omega_{\text{min}}$ is set to 0.7 and $\omega_{\text{max}}$ is set to 0.99. In our representation learning, the balance factor is set to 0.35, $\tau_s$ and $\tau_t$ are set to 0.1 and 0.007, for the pseudo-labels cross-entropy loss, $\epsilon$ is set to 1, for the NCE Loss, $\tau$ is set to 0.5, and $\mu$ and $\sigma$ are set to 0.1 and 1.0, respectively. The regularization balancing factors $\gamma$ and $\beta$ are set to \(3e^{-5}\) and 0.6, respectively. We use PyTorch to implement our method, and the experiments are conducted on a single RTX 4090 GPU.

\subsection{Comparison with the State-of-the-arts}
We compared the proposed method with the current state-of-the-art methods, which are categorized into non-parametric and parametric approaches. Non-parametric methods involve using semi-supervised k-means or other clustering techniques in the final classification stage, with GCD \cite{vaze2022generalized} being a representative example. In contrast, parametric methods, introduced by SimGCD \cite{wen2023parametric}, employ a parametric classification head. The non-parametric methods we compared include GCD \cite{vaze2022generalized}, XCon \cite{fei2022xcon}, DCCL \cite{pu2023dynamic}, PIM \cite{chiaroni2023parametric}, CiPR \cite{hao2023cipr}, PromptCAL \cite{zhang2023promptcal}, and InfoSieve \cite{rastegar2024learn}, while the parametric methods include SimGCD \cite{wen2023parametric}, $\mu$GCD \cite{vaze2024no}, and SPTnet \cite{wang2024sptnet}. Our NN-GCD, which builds on SimGCD, falls under the parametric category.

\textbf{Evaluation of Semantic Shift Benchmark (SSB).} Our method, utilizing both DINO and DINOv2 initialization, demonstrates significant advantages over state-of-the-art approaches on the SSB benchmark datasets. As shown in Tab.~\ref{tab:semantic_shift_benchmark}, applying DINO pretrained weights results in an average improvement of 10\% across all SSB datasets, with a 4.7\% increase over SPTnet. This highlights the superior performance of our method across various benchmark datasets, particularly in fine-grained classification tasks.

On the CUB dataset, NN-GCD achieves notably strong results, with an accuracy of 72.4\% in the "All" category, surpassing InfoSeive by 3.0\%. In the "New" category, the accuracy reaches 70.5\%, outperforming InfoSeive by 5.3\%. Although NN-GCD ranks second in the "Old" category with 76.3\%, its stable performance suggests robust handling of both new and old categories. For the Stanford Cars dataset, NN-GCD achieves 65.4\% accuracy in the "All" category, surpassing SPTnet by 6.4\%, and 59.5\% in the "New" category, marking a 7.0\% improvement over InfoSeive. On the Aircraft dataset, NN-GCD achieves 60.5\% accuracy in the "All" category and 58.2\% in the "New" category, slightly outperforming SPTnet. Overall, NN-GCD sets a new state-of-the-art on the CUB, Stanford Cars, and Aircraft datasets, excelling in both the "All" and "New" categories while maintaining strong performance in the "Old" category. The NMF NCE loss effectively balances performance across both new and old categories, and the integration of Non-negative Activated Neurons and Hybrid Sparse Regularization further optimizes \(\bar{A}^*\), enhancing fine-grained classification performance.

Additionally, further experiments on the SSB benchmark using DINOv2 initialization show NN-GCD achieving state-of-the-art results, surpassing the baseline by 10.7\% and outperforming \(\mu\)GCD by 7.6\%. In the CUB, Aircraft, and Stanford Cars datasets, NN-GCD demonstrated substantial improvements, underscoring the critical role of pre-trained models like DINOv2 in enhancing the performance of our method, particularly in handling new categories, further validating the effectiveness of our approach.

\textbf{Evaluation of generic image recognition datasets.} The results on CIFAR-10, CIFAR-100, and ImageNet-100, as shown in Tab.~\ref{tab:generic_image_recognition_datasets}, demonstrate the effectiveness of NN-GCD in comparison with state-of-the-art methods. On CIFAR-10, NN-GCD achieved accuracies of 97.3\% in the ``All'' category and 98.6\% in the ``New'' category, performing on par with SPTnet. This shows that our method is effective in extracting features and delivering high classification performance, even for low-resolution images.

For CIFAR-100, which poses more complexity due to its larger number of categories, NN-GCD achieved 78.8\% in the ``New'' category, outperforming most methods. Although slightly behind CiPR in the ``Old'' category, the method exhibits strong generalization and classification ability. The low resolution of CIFAR-10 and CIFAR-100 (32×32) may limit further performance gains, particularly in sparsification, but NN-GCD remains competitive. On ImageNet-100, NN-GCD reached 82.0\% accuracy in the ``New'' category and 85.2\% in the ``All'' category, demonstrating adaptability even on large-scale datasets with a highly optimized DINO backbone.

In summary, NN-GCD shows strong generalization and robustness across all three datasets, excelling in both known and unseen category classification. Its strengths in sparsification and feature extraction make it highly competitive across a range of image recognition tasks.

\textbf{Evaluation of Herbarium 19 dataset.} The results on the Herbarium 19 dataset, as shown in Tab.~\ref{tab:herbarium19}, demonstrate that NN-GCD delivers competitive performance across categories. In the ``All'' category, NN-GCD achieved 43.5\% accuracy, slightly surpassing SPTnet (43.4\%) and closely matching SimGCD (44.0\%), showing strong overall classification ability. In the ``Old'' category, NN-GCD and SPTnet both achieved the highest accuracy of 58.7\%, outperforming SimGCD, highlighting NN-GCD's effectiveness in leveraging known categories.

For the ``New'' category, NN-GCD achieved 36.0\%, slightly behind SimGCD (36.4\%) but still surpassing SPTnet and other methods, indicating strong generalization to unseen categories. The imbalanced nature of the dataset, with large variations in sample sizes, likely limits the model's learning of new categories, contributing to slightly lower performance. Nevertheless, NN-GCD remains competitive with state-of-the-art methods, excelling in the ``Old'' category and demonstrating robust generalization in the ``New'' category.

\subsection{Ablation Study}
In this ablation study, we evaluated the contributions of three key components: Non-negative Activated Neurons, NMF NCE loss ($\mathcal{L}_{\text{NNCE}}$), and Hybrid Sparse Regularization ($\mathcal{L}_{HSR}$), analyzing both their individual and combined effects on performance. As shown in Tab.~\ref{tab:methods_comparison}, the results indicate that each component contributes positively to the model's accuracy.

First, Non-negative Activated Neurons serve as a foundational module, improving feature representation by activating positively correlated features while suppressing negative signals. This resulted in accuracy gains of 4.3\%, 3.5\%, and 4.6\% in the "All," "Old," and "New" categories, respectively. These results demonstrate the module's effectiveness in distinguishing between old and new categories, particularly in improving generalization in the latter.

Next, $\mathcal{L}_{HSR}$ enhances feature differentiation by sparsifying representations, resulting in a 6.3\% improvement in the "All" category, with 6.7\% and 6.1\% improvements in the "Old" and "New" categories, respectively. This highlights $\mathcal{L}_{HSR}$’s ability to extract fine-grained features, particularly benefiting generalization in unseen categories.

\begin{table}[h!]
\centering
\caption{Ablation studies of various components. The symbol $\checkmark$ denotes the inclusion of specific components in the evaluated methods. Performance improvements relative to baseline are indicated by arrows and highlighted in red text.}
\fontsize{6.5}{11}\selectfont 
\begin{tabular}{ccccccc}
\toprule
\rowcolor{gray!20} \textbf{Non-negative} & $\boldsymbol{\mathcal{L}_{\text{NNCE}}}$ & $\boldsymbol{\mathcal{L}_{\text{HSR}}}$ & \textbf{All} & \textbf{Old} & \textbf{New} \\
\midrule
 &  &  & 56.1 & 65.5 & 51.5 \\
\midrule
\rowcolor{gray!10} \checkmark &  &  & 60.4 $\uparrow$ \textcolor{red}{\scriptsize{+4.3}} & 69.0 $\uparrow$ \textcolor{red}{\scriptsize{+3.5}} & 56.1 $\uparrow$ \textcolor{red}{\scriptsize{+4.6}} \\
 & \checkmark &  & 57.9 $\uparrow$ \textcolor{red}{\scriptsize{+1.8}} & 67.6 $\uparrow$ \textcolor{red}{\scriptsize{+2.1}} & 53.1 $\uparrow$ \textcolor{red}{\scriptsize{+1.6}} \\
\rowcolor{gray!10} &  & \checkmark & 62.4 $\uparrow$ \textcolor{red}{\scriptsize{+6.3}} & 72.2 $\uparrow$ \textcolor{red}{\scriptsize{+6.7}} & 57.6 $\uparrow$ \textcolor{red}{\scriptsize{+6.1}} \\
\midrule
\checkmark & \checkmark &  & 60.6 $\uparrow$ \textcolor{red}{\scriptsize{+4.5}} & 68.6 $\uparrow$ \textcolor{red}{\scriptsize{+3.1}} & 60.0 $\uparrow$ \textcolor{red}{\scriptsize{+8.5}} \\
\rowcolor{gray!10} \checkmark &  & \checkmark & 62.7 $\uparrow$ \textcolor{red}{\scriptsize{+6.6}} & 72.4 $\uparrow$ \textcolor{red}{\scriptsize{+6.9}} & 58.0 $\uparrow$ \textcolor{red}{\scriptsize{+6.5}} \\
 & \checkmark & \checkmark & 62.7 $\uparrow$ \textcolor{red}{\scriptsize{+6.6}} & 71.6 $\uparrow$ \textcolor{red}{\scriptsize{+6.1}} & 58.4 $\uparrow$ \textcolor{red}{\scriptsize{+6.9}} \\
\midrule
\rowcolor{gray!20} \checkmark & \checkmark & \checkmark & 65.8 $\uparrow$ \textcolor{red}{\scriptsize{+9.7}} & 73.4 $\uparrow$ \textcolor{red}{\scriptsize{+7.9}} & 62.1 $\uparrow$ \textcolor{red}{\scriptsize{+10.6}} \\
\bottomrule
\end{tabular}
\label{tab:methods_comparison}
\end{table}

\begin{table}[h!]
\centering
\caption{Category discovery accuracy of different pre-trained models using NN-GCD on CUB and Stanford Cars. }
\fontsize{8.0}{11}\selectfont 
\begin{tabular}{l>{\columncolor{Ocean}}ccc>{\columncolor{Ocean}}ccc}
\toprule
\multicolumn{1}{c}{\textbf{Pre-training Method}} & \multicolumn{3}{c}{\textbf{CUB}} & \multicolumn{3}{c}{\textbf{Stanford Cars}} \\
\cmidrule(lr){2-4} \cmidrule(lr){5-7}
& \textbf{All} & \textbf{Old} & \textbf{New} & \textbf{All} & \textbf{Old} & \textbf{New}  \\
\midrule
DINO \cite{caron2021emerging} & 72.4 & 76.3 & 70.5 & 64.8 & 81.9 & 56.6 \\
DINOV2 \cite{oquab2023dinov2}  & 83.1 & 82.5 & 83.4 & 80.3 & 89.9 & 75.7 \\
MAE \cite{he2022masked} & 59.8 & 68.7 & 55.4 & 52.4 & 72.8 & 42.6 \\
iBOT \cite{zhou2021ibot} & 65.9 & 73.2 & 62.3 & 62.5 & 78.1 & 54.9 \\
MocoV3 \cite{chen2021empirical} & 64.7 & 73.7 & 60.2 & 55.5 & 75.6 & 45.8\\
\midrule
SAM \cite{foret2020sharpness} & 65.6 & 71.2 & 62.9 & 48.3 & 66.9 & 39.3\\
DeiT \cite{touvron2021training} & 70.1 & 74.5 & 67.9 & 48.5 & 73.3 & 36.5 \\
DeiT3 \cite{touvron2022deit} & 71.0 & 74.5 & 69.3 & 57.5 & 74.3 & 49.4 \\
CLIP \cite{radford2021learning} & 76.0 & 77.9  & 75.0 &75.1 & 87.4 & 69.1 \\
Supervised-1K \cite{dosovitskiy2020image} & 76.6 & 78.5 & 75.6 & 52.7 & 73.1 & 42.8  \\
Supervised-22K \cite{dosovitskiy2020image} & 72.2 & 74.5 & 71.1 & 46.5 & 68.7 & 35.7  \\
\bottomrule
\end{tabular}
\label{tab: different_pre-trained_models}
\end{table}

Moreover, when examining combinations of two components, we observed significant improvements over the baseline. For instance, combining Non-negative Activated Neurons with $\mathcal{L}_{HSR}$ led to gains of 6.6\% in the "All" category and 6.5\% in the "New" category, showcasing the synergy between feature activation and sparsification. Similarly, the combination of $\mathcal{L}_{\text{NNCE}}$ with $\mathcal{L}_{HSR}$ improved accuracy by 6.6\% in the "All" category and 6.9\% in the "New" category, emphasizing the benefits of balancing category representation and feature sparsification. The results also suggest that $\mathcal{L}_{\text{NNCE}}$ helps balance learning between known and new categories.


When combining all three components, the model achieves its best results, with accuracies of 65.8\%, 73.4\%, and 62.1\% in the "All," "Old," and "New" categories, respectively. These represent significant improvements of 9.7\%, 7.9\%, and 10.6\% over the baseline, demonstrating the complementary effects of feature activation, category balancing, and sparsification. The synergy between these components enhances classification performance, especially in generalizing to new categories.

\begin{figure*}[h] 
    \centering
    \begin{subfigure}[b]{0.30\textwidth}
        \centering
        \includegraphics[width=\textwidth]{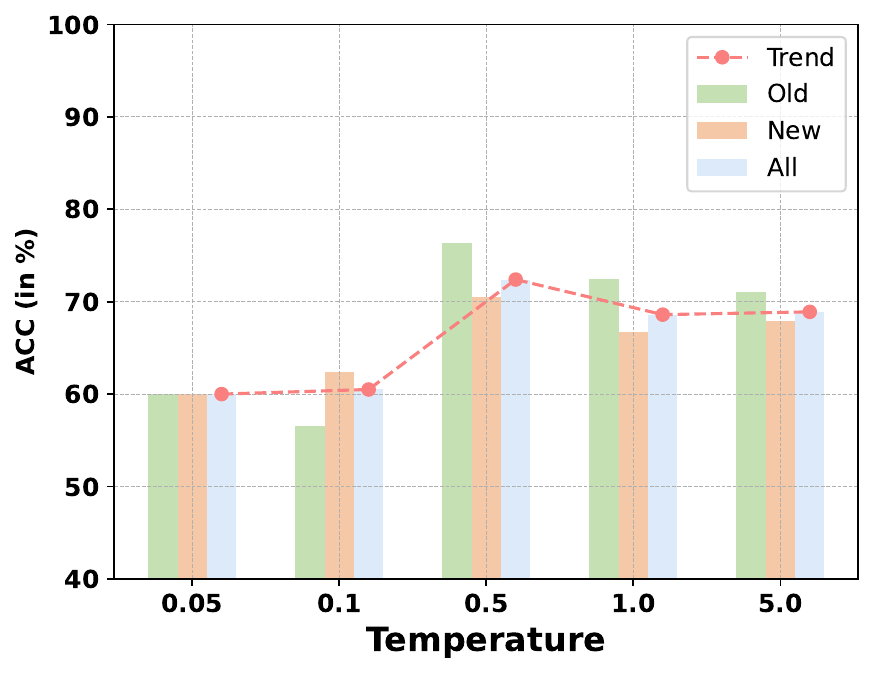}
    \end{subfigure}
    \hspace{0.0001\textwidth}
    \begin{subfigure}[b]{0.30\textwidth}
        \centering
        \includegraphics[width=\textwidth]{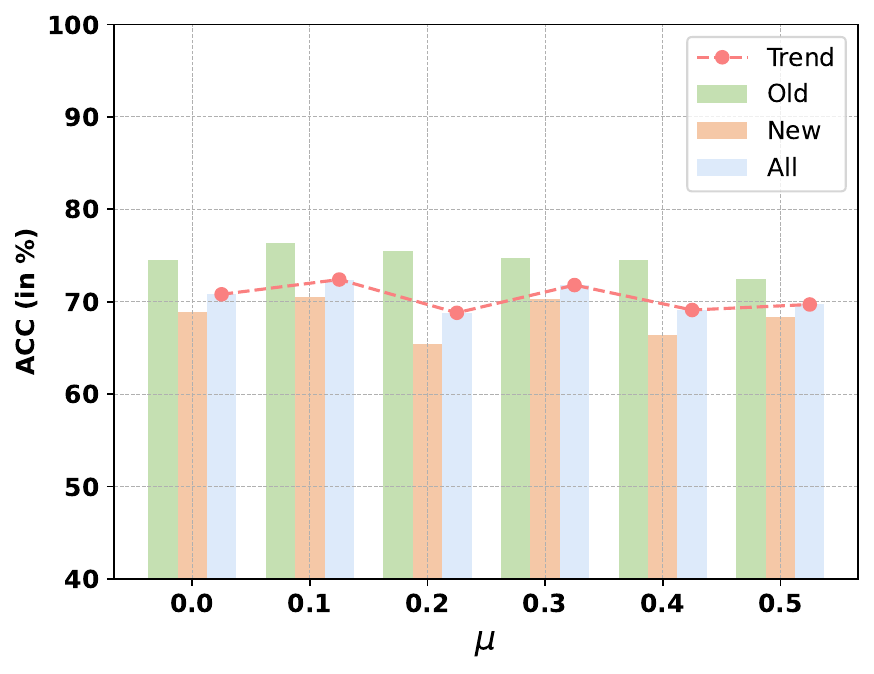}
    \end{subfigure}
    \hspace{0.0001\textwidth}
    \begin{subfigure}[b]{0.30\textwidth}
        \centering
        \includegraphics[width=\textwidth]{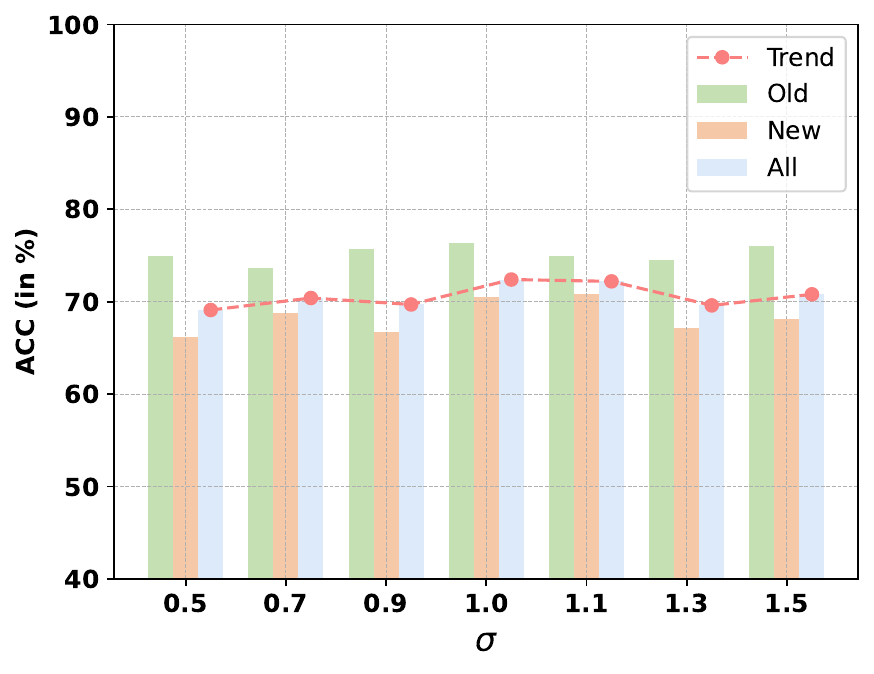}
    \end{subfigure}
    \caption{Results with varying hyperparameters of NMF NCE loss on CUB. From left to right: temperature, \(\mu\) and \(\sigma\).}
    \label{fig: NCE_hyper}
\end{figure*}

\begin{figure*}[h] 
    \centering
    \begin{subfigure}[b]{0.30\textwidth}
        \centering
        \includegraphics[width=\textwidth]{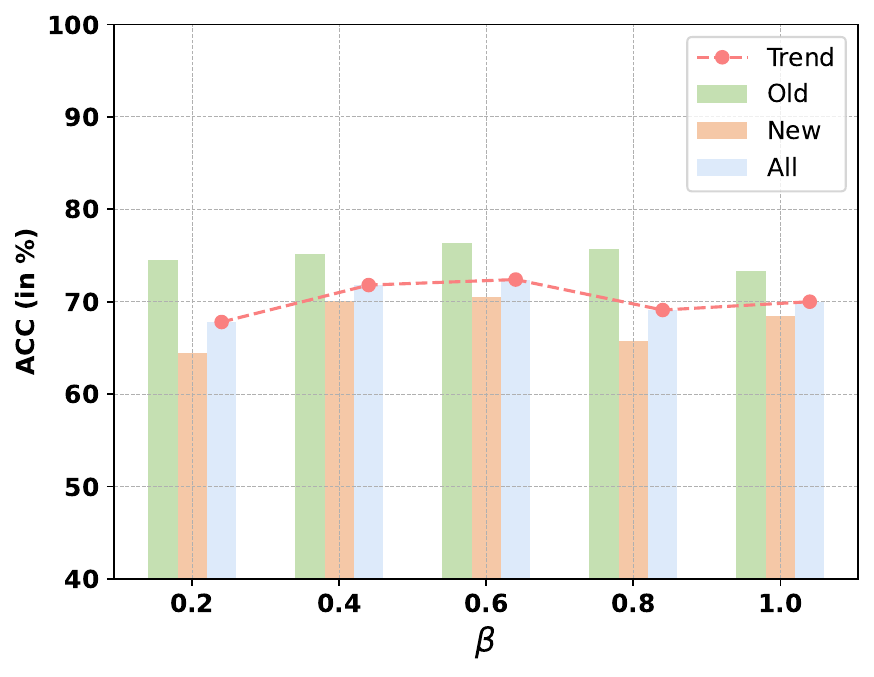}
    \end{subfigure}
    \hspace{0.0001\textwidth}
    \begin{subfigure}[b]{0.30\textwidth}
        \centering
        \includegraphics[width=\textwidth]{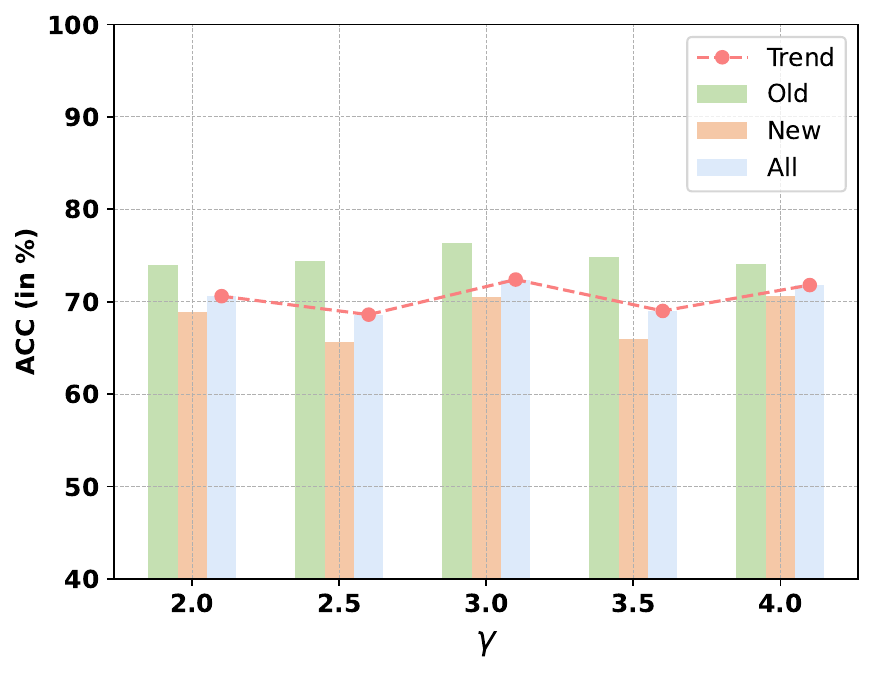}
    \end{subfigure}
    \hspace{0.0001\textwidth}
    \begin{subfigure}[b]{0.30\textwidth}
        \centering
        \includegraphics[width=\textwidth]{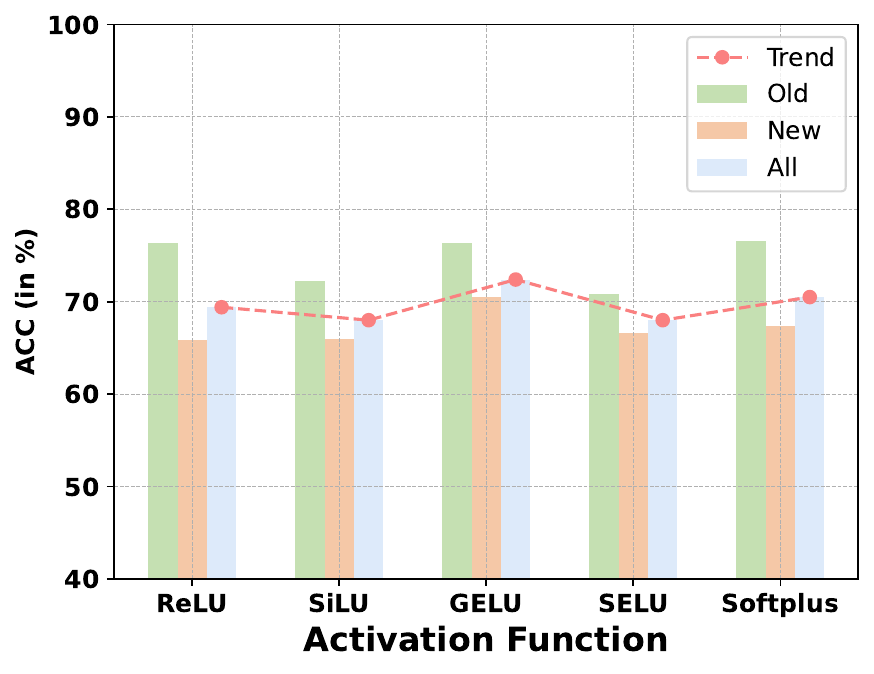}
    \end{subfigure}
    \caption{Results with varying hyperparameters of Hybrid Sparse Regularization and different activation functions on CUB.}
    \label{fig: Reg_hyper}
\end{figure*}

\begin{figure*}[h!] 
    \centering
    \begin{subfigure}[b]{0.8\textwidth}
        \centering
        \includegraphics[width=\textwidth]{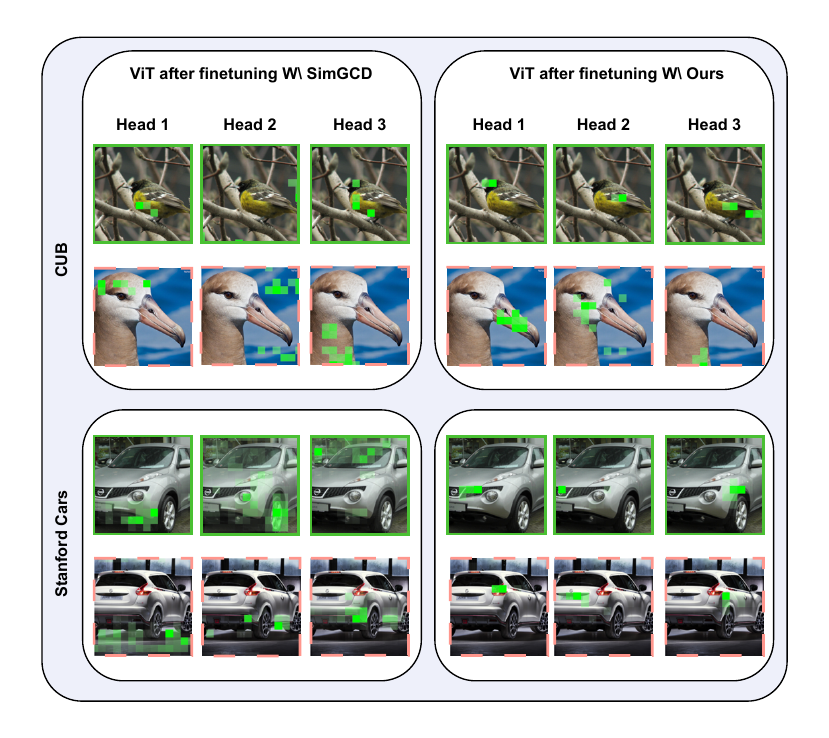}
    \end{subfigure}
    \caption{Attention visualizations.Attention maps for the DINO model fine-tuned with SimGCD (left) and fine-tuned with our approach (right) on the CUB (top) and Stanford Cars (bottom) datasets. For each dataset, we show one row of images from the 'Old' classes (solid green box) and one row of images from the 'New' classes (dashed red box). The heads shown are three randomly selected attention heads from the ViT. Compared to SimGCD, our model better learns to specialize attention heads to different semantically meaningful object parts.}
    \label{fig: visual_attention}
\end{figure*}

\begin{figure*}[h!] 
    \centering
    \begin{subfigure}[b]{0.30\textwidth}
        \centering
        \includegraphics[width=\textwidth]
        {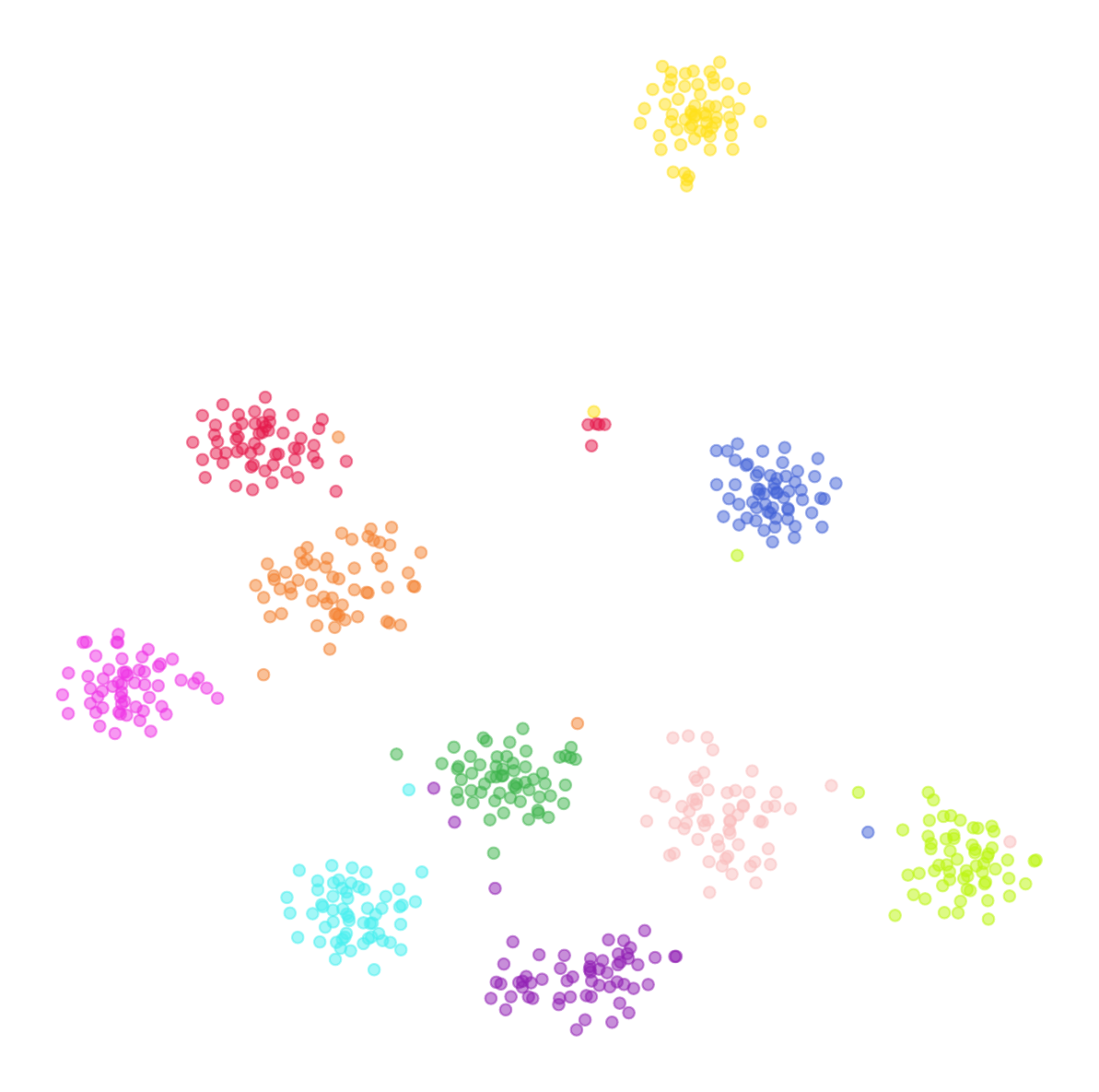}
    \end{subfigure}
    \hspace{0.0001\textwidth}
    \begin{subfigure}[b]{0.30\textwidth}
        \centering
        \includegraphics[width=\textwidth]
        {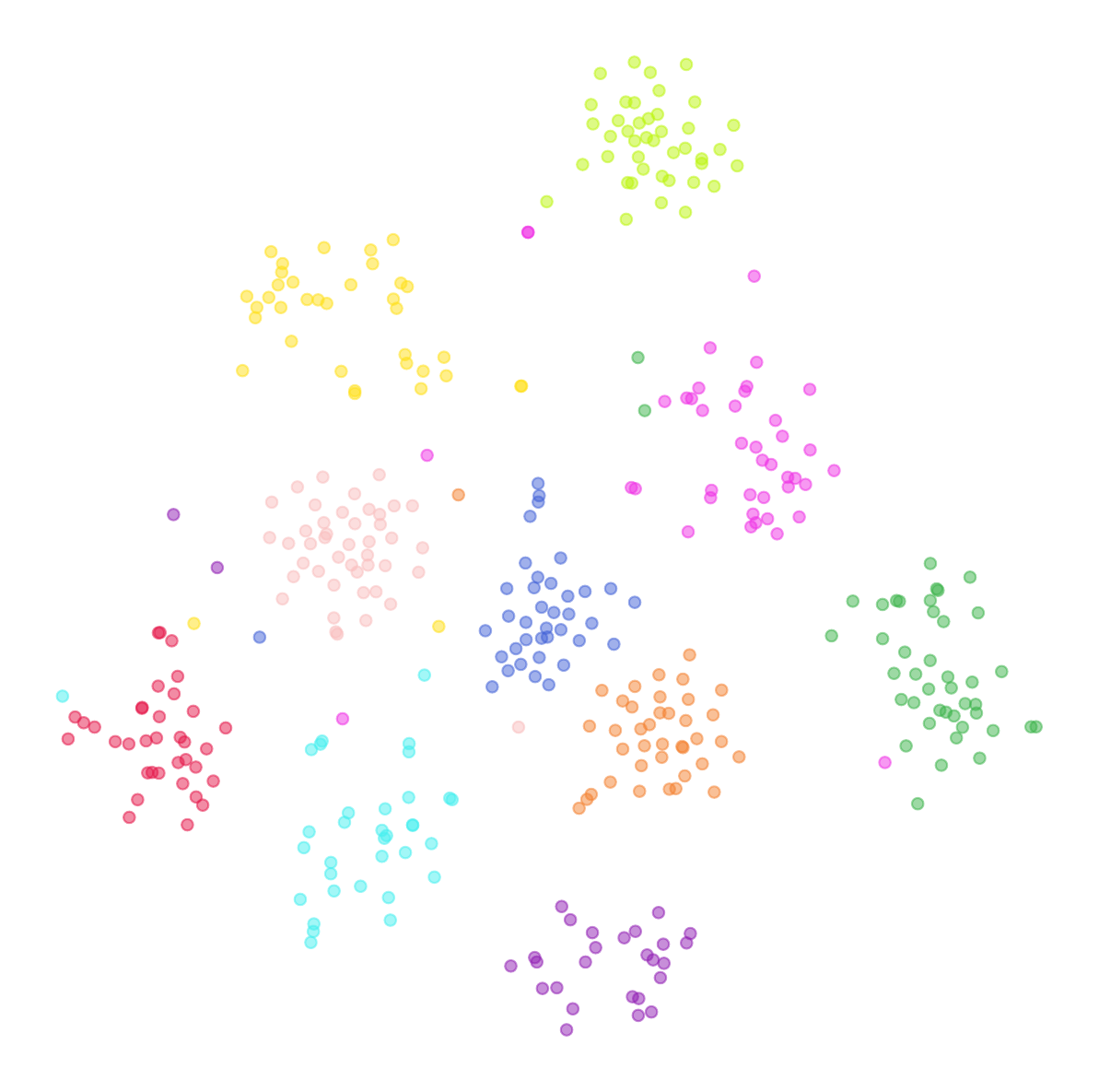}
    \end{subfigure}
    \hspace{0.0001\textwidth}
    \begin{subfigure}[b]{0.30\textwidth}
        \centering
        \includegraphics[width=\textwidth]{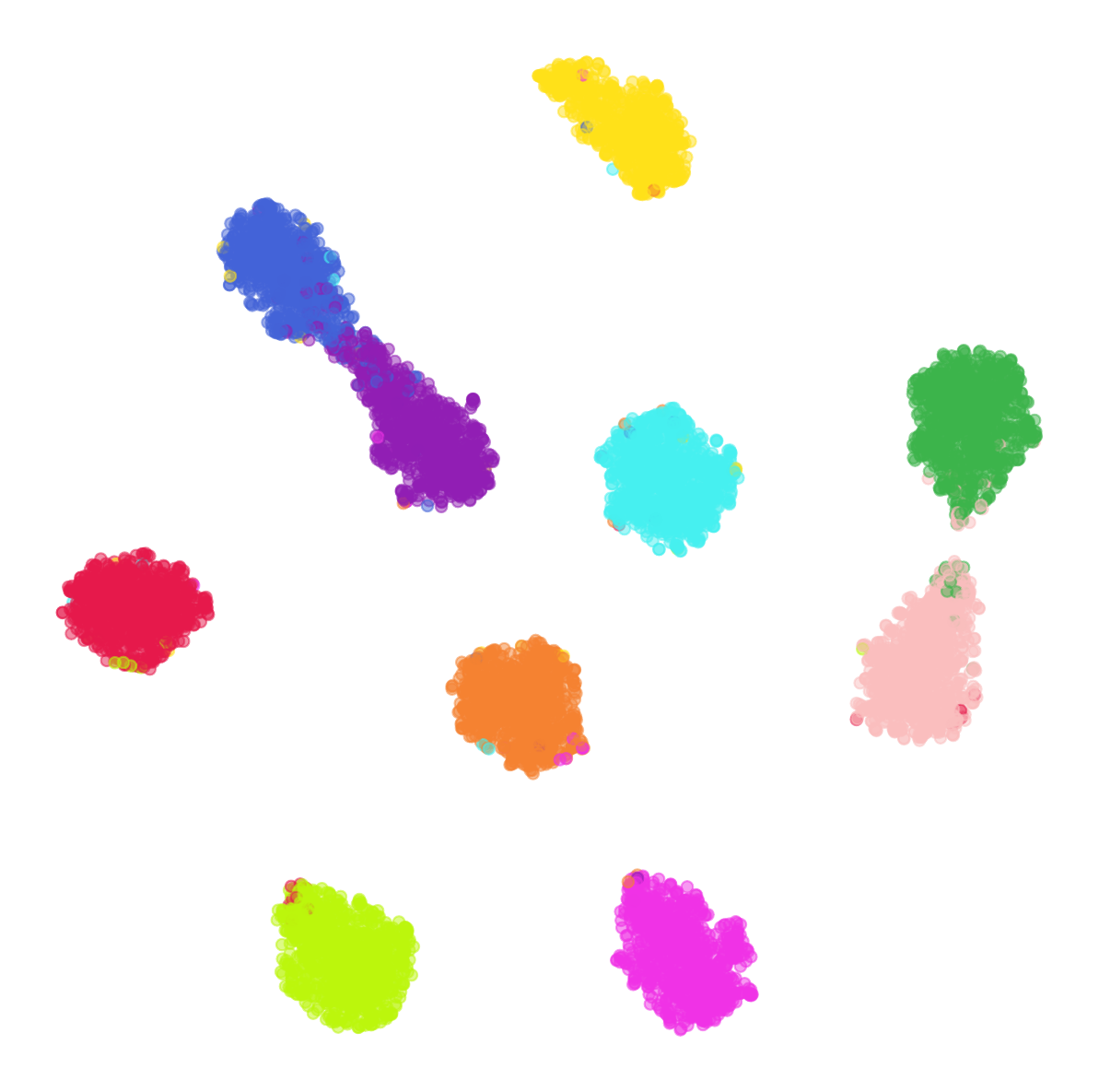}
    \end{subfigure}
    \hspace{0.0001\textwidth}
    \hspace{0.0001\textwidth}
    \caption{T-SNE visualization of CUB, Standford Cars, and CIFAR-10 representations.}
    \label{fig: Visualization}
\end{figure*}


\subsection{Hyper-parameters Study}
Figures~\ref{fig: NCE_hyper} and \ref{fig: Reg_hyper} show the impact of hyperparameter adjustments on the performance of the CUB dataset. For the NMF NCE loss, the optimal hyperparameters are set as a temperature of 0.5, \(\mu\) of 0.1, and \(\sigma\) of 1. These settings ensure balanced contrastive learning between old and new categories, improving overall classification accuracy. The trends in the figures indicate that as the temperature increases, there is a significant improvement in the performance of the "New" category, especially around a temperature of 0.5, highlighting the importance of appropriate contrastive scaling for recognizing new categories. However, changes in \(\mu\) and \(\sigma\) have a more modest impact, with optimal values observed at \(\mu = 0.1\) and \(\sigma = 1\), showing that they fine-tune category balance without causing large performance fluctuations.

For Hybrid Sparse Regularization and the Non-negative Activated Neurons module, the optimal value for \(\beta\) is 0.6, and although \(\gamma\) is shown as 3 in the figures, our code uses \(1e^{-5}\) as a scaling factor. The actual optimal value is \(3e^{-5}\), and this scaling factor is chosen about the size of the model's parameters. Larger models typically require smaller scaling factors to ensure effective regularization without overly penalizing the learning process. GELU was selected as the best activation function. The figures demonstrate that both GELU and SiLU significantly improve performance in both "Old" and "New" categories, with GELU providing slightly better generalization to unseen categories. The performance trends indicate that increasing \(\beta\) and \(\gamma\) helps capture finer-grained features, with \(\gamma = 3e^{-5}\) providing consistent improvements across all categories. This suggests that the combination of appropriate sparsity regularization and activation functions is crucial for enhancing fine-grained feature extraction, especially for recognizing new categories.

As shown in Tab.~\ref{tab: different_pre-trained_models}, the performance of different pre-training methods on the CUB and Stanford Cars datasets highlights the significant impact that pre-training weights have on fine-grained classification tasks. DINO and DINOv2 stand out, particularly DINOv2, which achieves 83.1\% accuracy in the "All" category on the CUB dataset and 80.3\% on the Stanford Cars dataset, demonstrating strong generalization capabilities, especially for new category recognition.

As shown in Tab.~\ref{tab: different_pre-trained_models}, regardless of whether a self-supervised or supervised pre-training model is used, our method consistently maintains robust performance across various pre-trained weight initializations. For example, even when using MAE \cite{he2022masked} or iBOT \cite{zhou2021ibot} pre-training, the method still achieves high accuracy on both datasets. This demonstrates that our approach is adaptable and versatile, not relying on a specific pre-training model, and can maintain strong classification performance under a variety of initialization conditions.

These results highlight the robustness of our method across diverse pre-training settings, further validating its effectiveness and broad applicability.

\subsection{Visualization}
Figure \ref{fig: visual_attention} shows the attention maps of the DINO model fine-tuned with SimGCD (left) and our approach (right) on the CUB (top) and Stanford Cars (bottom) datasets. It is clear that our approach more effectively focuses attention heads on semantically meaningful object parts compared to SimGCD. This demonstrates that the introduction of sparsity and Non-negative Activated Neurons enhances both model performance and interpretability. The non-negativity constraint ensures that features represent meaningful components, leading to a more interpretable factorization. Moreover, the non-negativity constraint encourages an additive parts-based representation by preventing the cancellation of components, thereby enhancing the interpretability of the attention heads.

Figure \ref{fig: Visualization} presents T-SNE visualizations of representations from the CUB, Stanford Cars, and CIFAR-10 datasets, with ten classes selected from each dataset. These visualizations highlight the model's strong representational power, characterized by large inter-class distances and small intra-class distances, indicating that it has achieved an optimal $\bar{A}^*$.


\section{Conclusion}
In this paper, we present NN-GCD, a novel framework tackling GCD complexities. Through the meticulous optimization of the co-occurrence matrix $\bar{A}$, and by harnessing the synergistic theoretical underpinnings of K-means clustering, SNMF, and NCL, we have devised a Non-negative Activated Neurons mechanism. This innovative construct, further reinforced by the NMF NCE loss coupled with Hybrid Sparsity Regularization, facilitates an unprecedented level of intra-class compactness and inter-class differentiation, culminating in the derivation of the optimal $\bar{A}^*$. Rigorous empirical evaluations affirm the NN-GCD framework's superior efficacy, solidifying its status as a vanguard methodology in the realm of advanced category discovery.

{
    \small
    \bibliographystyle{IEEEtran}
    \bibliography{main}
}

\vfill
\end{document}